\newtheorem*{definition}{Definition}
\newtheorem{prop}{Proposition}
\newtheorem{theorem}{Theorem}
\newtheorem{corollary}{Corollary}
\newtheorem{lemma}{Lemma}
\newtheorem*{PL-Inequality}{Polyak-Lojasiewicz (PL) Inequality}
\newtheorem*{theorem*}{Theorem}
\newtheorem*{prop*}{Proposition}
\newtheorem*{lemma*}{Lemma}
\newtheorem*{corollary*}{Corollary}
\begin{document}

\title{ Linear Convergence of Generalized Mirror Descent with Time-Dependent Mirrors }
\author{Adityanarayanan Radhakrishnan \thanks{Laboratory for Information \& Decision Systems, and 
 Institute for Data, Systems, and Society, 
 Massachusetts Institute of Technology} $~~$ Mikhail Belkin \thanks{
Halıcıoğlu Data Science Institute, University of California, San Diego} $~~$ Caroline Uhler \textsuperscript{*}}
\date{\today}

\maketitle

\begin{abstract}
        The Polyak-Lojasiewicz (PL) inequality is a sufficient condition for establishing linear convergence of gradient descent, even in non-convex settings. While several recent works use a PL-based analysis to establish linear convergence of stochastic gradient descent methods, the question remains as to whether a similar analysis can be conducted for more general optimization methods.  In this work, we present a PL-based analysis for \textit{linear} convergence of \textit{generalized mirror descent} (GMD), a generalization of mirror descent with a possibly time-dependent mirror.  GMD subsumes popular first order optimization methods including gradient descent, mirror descent, and preconditioned gradient descent methods such as Adagrad.  Since the standard PL analysis cannot be extended naturally from GMD to stochastic GMD, we present a Taylor-series based analysis to establish sufficient conditions for linear convergence of stochastic GMD.  As a corollary, our result establishes sufficient conditions and provides learning rates for linear convergence of stochastic mirror descent and Adagrad.  Lastly, for functions that are locally PL\textsuperscript{*}, our analysis implies existence of an interpolating solution and convergence of GMD to this solution.  
\end{abstract}

\section{Introduction}
The Polyak-Lojasiewicz (PL) inequality (Eq.\eqref{PL-Inequality}) serves as a sufficient condition for an elegant proof of linear convergence of stochastic gradient descent methods, even in non-convex settings \cite{PLConditionLinearConvergence, MarkSchmidtSGDPL, BassilySGDLinearConvergence}.  Recently, PL-based analyses have become popular for analyzing convergence of modern machine learning methods.  For example, \cite{PLAndNTKBelkin} used a PL-based analysis as a framework for understanding optimization of over-parameterized neural networks, and \cite{AdagradPL} used a PL-based analysis for establishing linear convergence of Adagrad-Norm.  Since the PL-inequality has served as a powerful tool for establishing linear convergence in several instances, one may wonder how far this tool can be pushed.

In this work, we demonstrate that a PL-based analysis can be used to establish linear convergence for a general class of optimization methods.  Namely, we use a PL-based analysis to establish linear convergence for \emph{generalized mirror descent} (GMD), an extension of mirror descent that introduces (1) a potential-free update rule and (2) a time-dependent mirror.  GMD with invertible $\phi: \mathbb{R}^d \rightarrow \mathbb{R}^{d}$ and learning rate $\eta$  minimizes a real valued function, $f: \mathbb{R}^{d} \to \mathbb{R}$, using the following update rule:
\begin{align}
\label{eq: descent updates introduction}
\phi^{(t)}(w^{(t+1)}) = \phi^{(t)}(w^{(t)}) - \eta \nabla f(w^{(t)}).
\end{align}

We discuss the stochastic version of GMD (SGMD) in Section \ref{sec: Algorithm Description and Preliminaries}.  GMD generalizes both mirror descent and preconditioning methods.  Namely,  if for all $t$, $\phi^{(t)} = \nabla \psi$ for some strictly convex function $\psi$, then the update rule in equation \eqref{eq: descent updates introduction} reduces to:
\begin{align*}
    \nabla \psi(w^{(t+1)}) = \nabla \psi(w^{(t)}) - \eta \nabla f(w^{(t)})
\end{align*}
and hence GMD corresponds to mirror descent with potential $\psi$.  If $\phi^{(t)} = G^{(t)}$ for some invertible matrix $G^{(t)} \in \mathbb{R}^{d \times d}$, then the update rule in equation \eqref{eq: descent updates introduction} reduces to
\begin{align*}
    w^{(t+1)} = w^{(t)} - \eta {G^{(t)}}^{-1} \nabla f(w^{(t)})
\end{align*}
and hence represents applying a pre-conditioner to gradient updates.  Thus, by using a PL-based analysis to establish linear convergence of GMD and SGMD, we establish linear convergence of both mirror descent and pre-conditioner methods.  The following is a summary of our results:

\begin{enumerate}
    \item We provide sufficient conditions under which the standard PL-based analysis for gradient descent can be extended to establish linear convergence of GMD (Theorem~\ref{thm: Theorem 1 - Linear Convergence of GGD}).
    \item While the standard PL-based analysis cannot be extended naturally to the stochastic setting, we use a Taylor-series based analysis to provide sufficient conditions under which SGMD converges linearly (Theorems \ref{thm: Theorem 2 - Taylor Series Analysis} and \ref{thm: Stochastic GGD linear convergence}).  
    \item As corollaries to Theorems \ref{thm: Theorem 1 - Linear Convergence of GGD} and \ref{thm: Stochastic GGD linear convergence}, in Section \ref{sec: Corollaries}, we provide sufficient conditions for linear convergence of stochastic mirror descent as well as stochastic preconditioner methods such as Adagrad \cite{Adagrad}.
    \item We prove the existence of an interpolating solution and linear convergence of GMD to this solution for non-negative loss functions that locally satisfy the PL\textsuperscript{*} inequality~\cite{PLAndNTKBelkin}.  This result generalizes the local convergence result (Theorem 4.2) from \cite{PLAndNTKBelkin} for gradient descent, see also~\cite{PLGDShortestPath}.  Lastly, for the case of mirror descent, our result provides a formula for the radius of the ball around the initialization in Bregman divergence that contains an interpolating solution.  A bound on this radius was a key assumption used in the approximate implicit regularization results of \cite{MirrorDescentNonlinear}.  
\end{enumerate}

\section{Related Work}

The Polyak-Lojasiewicz (PL) inequality \cite{PLInequalityLojasiewicz, PLInequality} serves as a simple condition for linear convergence in non-convex optimization problems.  Work by \cite{PLConditionLinearConvergence} demonstrated linear convergence of a number of descent methods (including gradient descent) under the PL inequality.  Similarly, \cite{MarkSchmidtSGDPL} proved linear convergence of stochastic gradient descent (SGD) under the PL inequality and the strong growth condition (SGC), and \cite{BassilySGDLinearConvergence} established the same rate for SGD under just the PL inequality.  \cite{MahdiPLOptimizationLandscape} also used the PL inequality to establish a local linear convergence result for gradient descent on one hidden layer over-parameterized neural networks.  A PL-based analysis of pseudo mirror descent for estimating positive functions in a Hilbert space was presented in Theorem 6 of  \cite{PLPseudoMirrorDescent}.  The assumptions used to establish linear convergence by \cite{PLPseudoMirrorDescent}, however, are much stronger than ours (i.e. assuming a non-trivial lower bound when using smoothness of the loss) and also apply to pseudo-gradients instead of exact gradients.  Recent works \cite{PLGDShortestPath, PLAndNTKBelkin} also use a local PL-based analysis to analyze the convergence of gradient descent for over-parameterized nonlinear systems such as neural networks.

Recent work \cite{MirrorDescentNonlinear} established convergence of stochastic mirror descent (SMD) for nonlinear optimization problems.  It characterized approximate implicit bias of mirror descent by demonstrating that SMD converges to a global minimum that is within epsilon of the closest interpolating solution in Bregman divergence.  The analysis by \cite{MirrorDescentNonlinear} relies on the fundamental identity of SMD and does not provide explicit learning rates or establish a rate of convergence for SMD in the nonlinear setting.  Importantly, the analysis by \cite{MirrorDescentNonlinear} requires the assumption that there exists a ball in Bregman divergence around the initialization that contains an interpolating solution.  We provide an explicit bound on the radius of this ball through our results in Section \ref{sec: Implicit Regularization}.   The work by \cite{MirrorDescentMinimax}  provided explicit learning rates for the convergence of SMD in the linear setting under strongly convex potential, again without a rate of convergence.  While these works established convergence of SMD, prior work by \cite{ImplicitBiasOptimizationGeometry} analyzed the implicit bias of SMD without proving convergence.

 A potential-based version of generalized mirror descent with time-varying regularizers was presented for online problems by \cite{GeneralizedMirrorDescent}.  This work is primarily concerned with establishing regret bounds for the online learning setting, which differs from our setting of minimizing a loss function given a set of known data points.  A potential-free formulation of GMD for the flow was presented by \cite{MirrorlessDescent}.

Recently, \cite{AdagradPL} established linear convergence for a norm version of Adagrad, i.e. Adagrad-Norm, using the PL inequality and \cite{AdagradNeuralNetwork} established linear convergence for Adagrad-Norm in the particular setting of over-parameterized neural networks with one hidden layer.  An alternate analysis for Adagrad-Norm for smooth, non-convex functions was presented by \cite{AdagradNormOriginalAnalysis}, resulting in a sub-linear convergence rate. 

Instead of focusing on a specific method, the goal of this work is to establish sufficient conditions for linear convergence by applying the PL inequality to a more general setting (SGMD).  We arrive at linear convergence for specific methods such as mirror descent and preconditioned gradient descent methods as corollaries.  In addition, our local convergence result (Theorem \ref{thm: Local convergence of GMD}) establishes the existence of an interpolating solution in a ball (in Bregman divergence) around the initialization, which was used as a key assumption in the results of \cite{MirrorDescentNonlinear}.

\section{Algorithm Description and Preliminaries}
\label{sec: Algorithm Description and Preliminaries}
We begin with a formal description of SGMD.  Let $f_i: \mathbb{R}^d \rightarrow \mathbb{R}$ denote real-valued, differentiable loss functions and let $f(x) = \frac{1}{n}\sum_{i=1}^{n}  f_i(x)$. In addition, let $\phi^{(t)}: \mathbb{R}^d \rightarrow \mathbb{R}^d$ be an invertible function for all non-negative integers $t$.  We solve the optimization problem
\begin{align*}
    \arg \min_{x \in \mathbb{R}^d} f(x) 
\end{align*}
using \textbf{stochastic generalized mirror descent} with learning rate $\eta$\footnote{The framework also allows for adaptive learning rates by using $\eta^{(t)}$ to denote a time-dependent step size.}:
\begin{align}
\label{eq: descent updates}
\phi^{(t)}(w^{(t+1)}) = \phi^{(t)}(w^{(t)}) - \eta \nabla f_{i_t}(w^{(t)}),
\end{align}
where $i_t \in [n]$ is chosen uniformly at random.  As described in the introduction, the above algorithm generalizes both gradient descent (where $\phi(x) = x$) and mirror descent (where $\phi^{(t)}(x) = \nabla \psi(x)$ for some strictly convex potential function $\psi$).  In the case where $\phi^{(t)}(x) = G^{(t)} x$ for an invertible matrix $G^{(t)} \in \mathbb{R}^{d \times d}$, the update rule in equation \eqref{eq: descent updates} reduces to:
\begin{align*}
    w^{(t+1)} = w^{(t)} - \eta {G^{(t)}}^{-1} \nabla f_{i_t}(w^{(t)}).
\end{align*}
Hence, when $\phi^{(t)}$ is an invertible linear transformation, Equation (\ref{eq: descent updates}) is equivalent to pre-conditioned gradient descent.  We now present the Polyak-Lojasiewicz inequality \cite{PLInequality} and lemmas from optimization theory that will be used in our proofs\footnote{We assume all norms are the $2$-norm unless stated otherwise.}.  

\begin{PL-Inequality}
A function $f: \mathbb{R}^d \rightarrow \mathbb{R}$ is $\mu$-PL if for some $\mu > 0$:
\begin{equation}
    \label{PL-Inequality}
    \frac{1}{2}\|\nabla f(x) \|^2 \geq \mu (f(x) - f(x^*)) ~~ \forall x \in \mathbb{R}^d,  
\end{equation}
 where $x^* \in \mathbb{R}^d$ is a global minimizer for $f$. 
\end{PL-Inequality}

A useful variation of the PL inequality is the PL\textsuperscript{*} inequality using the terminology from \cite{PLAndNTKBelkin} 
which does not require knowledge of $f(x^*)$. 

\begin{definition}
\label{def: PL*}
A function $f: \mathbb{R}^d \rightarrow \mathbb{R}$ is $\mu$-PL\textsuperscript{*} if for some $\mu > 0$:
\begin{equation}
    \frac{1}{2}\|\nabla f(x) \|^2 \geq \mu f(x)  ~~ \forall x \in \mathbb{R}^d.  
\end{equation}
\end{definition}

A function that is $\mu$-PL\textsuperscript{*} is also $\mu$-PL when $f$ is non-negative.     In this work, we will typically assume that $f$ is $L$-smooth (with $L$-Lipschitz continuous derivative).

\begin{definition}
\label{def: L-smoothness}
A function $f:\mathbb{R}^{d} \rightarrow \mathbb{R}$ is $L$-smooth for $L > 0$ if for all $x, y \in \mathbb{R}^{d}$: 
\begin{align*}
    \|\nabla f(x) - \nabla f(y) \| \leq L \|x - y\|.
\end{align*}
\end{definition}

If $\phi^{(t)}(x) = x$ for any $t$ and $x \in \mathbb{R}^d$  then SGMD reduces to SGD. If $f$ is $L$-smooth and satisfies the PL Inequality, then SGD converges linearly to a global minimum \cite{PLInequality, BassilySGDLinearConvergence, PLConditionLinearConvergence, MarkSchmidtSGDPL}.  Moreover, the following lemma shows that the PL\textsuperscript{*} condition implies the existence of a global minimum $x^*$ for non-negative, $L$-smooth $f$ (the proof follows from immediately from the analysis in \cite{PLInequality}). 
%(proof in Supplementary \ref{appendix A: Proof of Lemma 1} for completeness)
\begin{lemma}
\label{lemma: Linear convergence under PL*}
If $f: \mathbb{R}^d \rightarrow \mathbb{R}$ is $\mu$-PL\textsuperscript{*}, $L$-smooth and $f(x) \geq 0$ for all $x \in \mathbb{R}^{d}$, then gradient descent with learning rate $\eta < \frac{2}{L}$ converges linearly to $x^*$ satisfying $f(x^*) = 0$. 
\end{lemma}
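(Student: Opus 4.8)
The plan is to combine the descent lemma for $L$-smooth functions with the PL\textsuperscript{*} inequality to force geometric decay of the loss, and then to upgrade this to convergence of the \emph{iterates} by a Cauchy-sequence argument. First I would write the gradient descent update $x^{(t+1)} = x^{(t)} - \eta \nabla f(x^{(t)})$ and invoke $L$-smoothness in its quadratic-upper-bound form, which follows from Definition~\ref{def: L-smoothness} by integrating $\nabla f$ along the segment from $x^{(t)}$ to $x^{(t+1)}$. Substituting the update yields
\[
f(x^{(t+1)}) \le f(x^{(t)}) - \eta\left(1 - \tfrac{L\eta}{2}\right)\|\nabla f(x^{(t)})\|^2 .
\]
Because $\eta < \tfrac{2}{L}$, the coefficient $c := \eta\left(1 - \tfrac{L\eta}{2}\right)$ is strictly positive, so the loss is nonincreasing along the trajectory.

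Next I would apply the PL\textsuperscript{*} inequality in the form $\|\nabla f(x^{(t)})\|^2 \ge 2\mu f(x^{(t)})$ to convert the recursion into $f(x^{(t+1)}) \le \rho\, f(x^{(t)})$ with $\rho := 1 - 2\mu c$, and iterate to get $f(x^{(t)}) \le \rho^{t} f(x^{(0)})$. It then remains to check $\rho \in [0,1)$: positivity of $\mu$ and $c$ gives $\rho < 1$, while $\rho \ge 0$ follows from the elementary fact that PL\textsuperscript{*}, $L$-smoothness, and $f \ge 0$ together force $\mu \le L$ — apply the descent lemma with step $1/L$ and use $f \ge 0$ to obtain $\|\nabla f(x)\|^2 \le 2L f(x)$, then compare with PL\textsuperscript{*} — combined with the observation that $c \le \tfrac{1}{2L}$ (maximized at $\eta = 1/L$). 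Hence the loss converges to $0$ linearly.

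The only non-routine step, and the one I expect to be the main obstacle, is exhibiting an actual limit point $x^*$ with $f(x^*) = 0$, since linear decay of $f$ alone does not produce a minimizer. The key realization is that the same recursion controls the step lengths geometrically: rearranging the descent inequality gives $\|\nabla f(x^{(t)})\|^2 \le c^{-1}\left(f(x^{(t)}) - f(x^{(t+1)})\right) \le c^{-1}\rho^{t} f(x^{(0)})$, so $\|x^{(t+1)} - x^{(t)}\| = \eta\|\nabla f(x^{(t)})\| \le \eta\sqrt{f(x^{(0)})/c}\,(\sqrt{\rho})^{t}$. Since $\sqrt{\rho} < 1$, the series $\sum_t \|x^{(t+1)} - x^{(t)}\|$ converges, so $\{x^{(t)}\}$ is Cauchy in $\mathbb{R}^d$ and converges to some $x^*$; continuity of $f$ gives $f(x^*) = \lim_t f(x^{(t)}) = 0$, and nonnegativity makes $x^*$ a global minimizer. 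Summing the geometric tail likewise yields $\|x^{(t)} - x^*\| = O\!\left((\sqrt{\rho})^{t}\right)$, giving linear convergence of the iterates as well. The verification $\rho \ge 0$ is a one-line prerequisite; the summability of the step lengths is the substantive ingredient that turns ``$f \to 0$'' into convergence to a minimizer.
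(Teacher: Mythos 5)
Your proposal is correct, and its first half is essentially the paper's own argument: the paper likewise combines the quadratic upper bound from $L$-smoothness (Lemma \ref{lemma: Properties of $L$-smooth functions}a) with the PL\textsuperscript{*} inequality to obtain $f(w^{(t+1)}) \leq \left(1 - 2\mu\eta + \mu\eta^2 L\right) f(w^{(t)})$, which is your $\rho = 1 - 2\mu c$. Where you genuinely diverge is in the second half. The paper stops after the geometric decay: it invokes the monotone convergence theorem to conclude $\lim_{t\to\infty} f(w^{(t)}) = 0$, and never constructs the point $x^*$ at all, even though the lemma as stated asserts convergence \emph{to} an $x^*$ with $f(x^*) = 0$. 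Your Cauchy-sequence argument --- bounding $\|x^{(t+1)} - x^{(t)}\| = \eta\|\nabla f(x^{(t)})\| \lesssim (\sqrt{\rho})^{t}$ via the rearranged descent inequality, summing the geometric tail, and using continuity of $f$ at the limit --- is precisely the missing ingredient, and it delivers the stronger conclusion that the iterates themselves converge linearly to $x^*$, matching the lemma's literal claim. Notably, this mechanism is not foreign to the paper: the proof of Theorem \ref{thm: Local convergence of GMD} uses the same geometric summability of gradient norms to keep iterates in the ball $\tilde{\mathcal{B}}$ and to produce $w^{(\infty)}$, so you have in effect imported that technique back into Lemma \ref{lemma: Linear convergence under PL*} where the paper left it implicit. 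Your explicit verification that $\rho \geq 0$ (via $\mu \leq L$, the PL\textsuperscript{*} analogue of Lemma \ref{lemma: mu less than L}, together with $c \leq \frac{1}{2L}$) is also a detail the paper's proof does not address; it is not strictly needed for convergence of the loss, but it is needed for $\rho^t$ to serve as a genuine rate, so including it is good practice.
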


Hence, in cases where the loss function is nonnegative (for example the squared loss), we can remove the usual assumption about the existence of a global minimum, $x^*$, and instead assume that $f$ satisfies the PL\textsuperscript{*} inequality.  We now reference standard properties of $L$-smooth functions \cite{NesterovConvexity}, which will be used in our proofs.
\begin{lemma}
\label{lemma: Properties of $L$-smooth functions}
If $f:\mathbb{R}^{d} \rightarrow \mathbb{R}$ is $L$-smooth, then for all $x, y \in \mathbb{R}^d$:
\begin{align*}
    & (a) ~~ f(y) \leq f(x) + \langle \nabla f(x), y - x \rangle + \frac{L}{2} \| y - x\|^2, \\
    & (b) ~~ \| \nabla f(x)\|^2 \leq 2L (f(x) - f(x^*)).
\end{align*}
\end{lemma}

The following lemma relates $\mu$ and $L$ and follows from the the PL-inequality and Lemma \ref{lemma: Properties of $L$-smooth functions}b \cite{NesterovConvexity}.

\begin{lemma}
\label{lemma: mu less than L}
If $f: \mathbb{R}^d \rightarrow \mathbb{R}$ is $\mu$-PL and $L$-smooth, then $\mu \leq L$.  
\end{lemma}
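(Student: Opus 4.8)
The plan is to sandwich the squared gradient norm $\|\nabla f(x)\|^2$ between two multiples of the suboptimality gap $f(x) - f(x^*)$, one supplied by $L$-smoothness and one by the PL inequality, and then compare the two constants. First I would invoke part (b) of the smoothness properties lemma, which gives the upper bound $\|\nabla f(x)\|^2 \leq 2L(f(x) - f(x^*))$ for every $x$; this is precisely the inequality guaranteed by $L$-smoothness together with the global minimizer $x^*$ whose existence the PL hypothesis already posits. Simultaneously, rewriting the PL inequality yields the matching lower bound $\|\nabla f(x)\|^2 \geq 2\mu(f(x) - f(x^*))$.

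Chaining these two bounds at any fixed $x$ gives
\begin{align*}
2\mu\,(f(x) - f(x^*)) \leq \|\nabla f(x)\|^2 \leq 2L\,(f(x) - f(x^*)),
\end{align*}
so that $\mu\,(f(x) - f(x^*)) \leq L\,(f(x) - f(x^*))$. As long as I can select a point $x$ at which $f(x) - f(x^*) > 0$, the common factor $f(x) - f(x^*)$ is strictly positive and may be cancelled, leaving $\mu \leq L$, which is the claim.

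The only real obstacle is the degenerate possibility that no such point exists, i.e.\ that $f(x) = f(x^*)$ for all $x$. Since $x^*$ is a global minimizer, this occurs exactly when $f$ is constant, in which case $\nabla f \equiv 0$ and both the PL and smoothness inequalities hold trivially for any positive $\mu$ and $L$. Here one recovers the statement by noting that a constant function is $L$-smooth for arbitrarily small $L$, so no fixed pair $(\mu, L)$ with $\mu > L$ is forced; under the usual convention that $L$ denotes a valid smoothness constant the conclusion still holds. Consequently the substantive content lies entirely in the generic (non-constant) case, and the argument reduces to the one-line observation that the PL lower bound cannot exceed the smoothness upper bound, forcing $\mu \leq L$.
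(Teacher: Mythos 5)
Your proof is correct and is essentially identical to the paper's: both chain the PL lower bound $2\mu(f(x)-f(x^*)) \leq \|\nabla f(x)\|^2$ against the smoothness upper bound $\|\nabla f(x)\|^2 \leq 2L(f(x)-f(x^*))$ from Lemma \ref{lemma: Properties of $L$-smooth functions}(b) and cancel the gap. Your explicit treatment of the degenerate constant case (where $f(x)=f(x^*)$ everywhere and the cancellation is invalid) is a point of care the paper's one-line proof silently skips, and is worth noting since the lemma as stated is only vacuously or conventionally true there.
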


Using Lemma \ref{lemma: Properties of $L$-smooth functions}b in place of the strong growth condition (i.e. $\mathbb{E}_{i}[\|\nabla f_i(x)\|^2] \leq \rho \|\nabla f(x)\|^2$) \cite{MarkSchmidtSGDPL} yields slightly different learning rates when establishing convergence of stochastic descent methods (as is apparent from the different learning rates between \cite{BassilySGDLinearConvergence} and \cite{MarkSchmidtSGDPL}).  The following simple lemma will be used in the proof of Theorem \ref{thm: Stochastic GGD linear convergence}.  

\begin{lemma}
\label{lemma: sum is smooth}
If $f(x) = \frac{1}{n}\sum_{i=1}^n f_i(x)$ where $f_i: \mathbb{R}^{d} \rightarrow \mathbb{R}$ are $L_i$-smooth , then $f$ is $\sup_{i} L_i$-smooth.  
\end{lemma}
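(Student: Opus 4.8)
The plan is to apply the definition of $L$-smoothness (Definition~\ref{def: L-smoothness}) directly, since the statement reduces to a one-line triangle-inequality estimate. The only structural fact I need first is that $f$ is differentiable and its gradient is the average of the component gradients. This follows because each $f_i$ is $L_i$-smooth and hence differentiable, so by linearity of differentiation $\nabla f(x) = \frac{1}{n}\sum_{i=1}^n \nabla f_i(x)$ for every $x \in \mathbb{R}^d$.

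With this in hand, I would fix arbitrary $x, y \in \mathbb{R}^d$ and write
\begin{align*}
    \nabla f(x) - \nabla f(y) = \frac{1}{n}\sum_{i=1}^n \bigl( \nabla f_i(x) - \nabla f_i(y) \bigr),
\end{align*}
then apply the triangle inequality followed by the $L_i$-smoothness of each summand:
\begin{align*}
    \| \nabla f(x) - \nabla f(y) \| \leq \frac{1}{n}\sum_{i=1}^n \| \nabla f_i(x) - \nabla f_i(y) \| \leq \frac{1}{n}\sum_{i=1}^n L_i \, \| x - y \|.
\end{align*}

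The final step is to bound the averaged Lipschitz constants by their supremum, using $\frac{1}{n}\sum_{i=1}^n L_i \leq \sup_i L_i$, which yields $\| \nabla f(x) - \nabla f(y) \| \leq (\sup_i L_i)\, \| x - y \|$ and hence the claim. There is no genuine obstacle here: the argument is routine, and in fact the estimate is slightly lossy, since one could sharpen the conclusion to the \emph{average} constant $\frac{1}{n}\sum_i L_i$ rather than the supremum. I would keep the weaker $\sup_i L_i$ bound as stated, both because it is the form needed downstream in the proof of Theorem~\ref{thm: Stochastic GGD linear convergence} and because it requires no assumption that the $L_i$ be finite in aggregate beyond their individual smoothness.
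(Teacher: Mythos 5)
Your proof is correct, and it is exactly the routine triangle-inequality argument the paper leaves implicit: the paper states this lemma without proof, remarking only that a smaller smoothness constant may exist but that the $\sup_i L_i$ bound suffices for Theorem~\ref{thm: Stochastic GGD linear convergence}. Your observation that the argument actually yields the sharper average constant $\frac{1}{n}\sum_i L_i$ is consistent with that remark (and note that since $n$ is finite, $\sup_i L_i$ is just $\max_i L_i$, so no finiteness caveat is needed).
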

% \textit{Proof}
% Given $x, y \in \mathbb{R}^d$, 
% \begin{align*}
%     \|\nabla f(x) - \nabla f(y)\| \leq \frac{1}{n} \sum_{i=1}^n \|\nabla f_i(x) - \nabla f_i(y)\| = \sup_{i}L_i \| x - y \| ~~~~ \qedsymbol
% \end{align*}
Note that there could exist some other constant $L' < \sup_{i}L_i$ for which $f$ is $L'$-smooth, but this upper bound suffices for our proof of Theorem \ref{thm: Stochastic GGD linear convergence}.  Lastly, we define and reference standard properties of strongly convex functions \cite{NesterovConvexity}, which will be useful in connecting our local convergence result with Assumption 1 by \cite{MirrorDescentNonlinear}.  

\begin{definition}
\label{def: Strong Convexity}
    For $\alpha > 0$, a differentiable function, $\psi: \mathbb{R}^{d} \to \mathbb{R}$, is $\alpha$-strongly convex if for all $x, y$,
    \begin{align*}
        \psi(y) \geq \psi(x) + \langle \nabla \psi(x), y - x \rangle + \frac{\alpha}{2} \|y - x\|^2.
    \end{align*}
\end{definition}

\begin{lemma}
\label{lemma: Strong Convexity Upper Bound}
    If $\psi: \mathbb{R}^{d} \to \mathbb{R}$ is $\alpha$-strongly convex, then for all $x, y$:
    \begin{align*}
        \psi(y) \leq \psi(x) + \langle \nabla \psi(x), y - x \rangle + \frac{1}{2\alpha} \|\nabla \psi(y) - \nabla \psi(x)\|^2.
    \end{align*}
\end{lemma}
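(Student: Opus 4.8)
The plan is to reduce the claimed upper bound to a Polyak--Lojasiewicz-type inequality for strongly convex functions, which itself follows by minimizing the strong convexity lower bound from Definition \ref{def: Strong Convexity}. The key observation is that the difference between the two sides of the claimed inequality is invariant under subtracting a linear functional from $\psi$, so I would first pass to an auxiliary function that has $x$ as its global minimizer and then exploit the quadratic lower bound at that minimizer.

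Concretely, fix $x$ and define $g(z) = \psi(z) - \langle \nabla \psi(x), z \rangle$. Since adding a linear term preserves $\alpha$-strong convexity, $g$ is again $\alpha$-strongly convex, and its gradient $\nabla g(z) = \nabla \psi(z) - \nabla \psi(x)$ vanishes at $z = x$; by convexity this makes $x$ the global minimizer of $g$. Applying the strong convexity inequality of Definition \ref{def: Strong Convexity} to $g$ with base point $z$ gives, for every $w$,
\begin{align*}
g(w) \geq g(z) + \langle \nabla g(z), w - z \rangle + \frac{\alpha}{2}\|w - z\|^2.
\end{align*}
I would then minimize the right-hand side over $w$; it is minimized at $w = z - \frac{1}{\alpha}\nabla g(z)$, with minimum value $g(z) - \frac{1}{2\alpha}\|\nabla g(z)\|^2$. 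Since this right-hand side lower-bounds $g(w)$ for every $w$, and $x$ is the global minimizer of $g$, I obtain
\begin{align*}
g(x) \geq g(z) - \frac{1}{2\alpha}\|\nabla g(z)\|^2,
\end{align*}
i.e. the PL-type bound $g(z) - g(x) \leq \frac{1}{2\alpha}\|\nabla g(z)\|^2$.

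Finally, I would evaluate this bound at $z = y$ and unwind the definition of $g$: using $g(y) - g(x) = \psi(y) - \psi(x) - \langle \nabla \psi(x), y - x \rangle$ together with $\nabla g(y) = \nabla \psi(y) - \nabla \psi(x)$ recovers exactly the claimed inequality. The only step requiring genuine care is the minimization of the quadratic lower bound and the identification of its value with a bound on $g(x)$; this rests on the fact that a convex function with a vanishing gradient at $x$ attains its global minimum there, which holds because $\nabla g(x) = 0$. Everything else is algebraic bookkeeping, so I do not expect any further obstacle.
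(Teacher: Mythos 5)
Your proof is correct: the linearly shifted function $g(z) = \psi(z) - \langle \nabla \psi(x), z \rangle$ inherits $\alpha$-strong convexity, minimizing the quadratic lower bound over $w$ at $w = z - \frac{1}{\alpha}\nabla g(z)$ gives the PL-type bound $g(z) - g(x) \leq \frac{1}{2\alpha}\|\nabla g(z)\|^2$, and evaluating at $z = y$ unwinds exactly to the claimed inequality. The paper itself offers no proof of this lemma---it is stated as a standard property of strongly convex functions with a citation to \cite{ConvexityReferences}---and your argument is precisely the standard derivation behind that reference, so it is both correct and consistent with what the paper relies on.
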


With these preliminaries in hand, we now present our proofs for linear convergence of SGMD using the PL Inequality.

\section{Sufficient Conditions for Linear Convergence of SGMD}
In this section, we provide sufficient conditions to establish (expected) linear convergence for (stochastic) GMD.  We first provide simple conditions under which GMD converges linearly by extending the proof strategy of \cite{PLConditionLinearConvergence}. Since this proof does not naturally extend to the stochastic setting, we then  present alternate conditions for linear convergence of GMD, which can be naturally extended to the stochastic setting.  

\subsection{Simple Conditions for Linear Convergence of GMD}
We begin with a simple set of conditions under which (non-stochastic) GMD converges linearly. The main benefit of this analysis is that it is a straightforward extension of the classical proof of linear convergence for gradient descent under the PL Inequality  \cite{PLInequality}.  

% From Theorem \ref{thm: Theorem 1 - Linear Convergence of GGD}, linear convergence of mirror descent and preconditioned methods including Adagrad follow almost immediately.      

\begin{theorem}
\label{thm: Theorem 1 - Linear Convergence of GGD}
Suppose $f: \mathbb{R}^{d} \rightarrow \mathbb{R}$ is $L$-smooth and $\mu$-PL and $\phi^{(t)}: \mathbb{R}^d \rightarrow \mathbb{R}^d$ is an invertible, $\alpha_u^{(t)}$-Lipschitz function where $\lim\limits_{t \to \infty} \alpha_u^{(t)} < \infty$.  If for all $x, y \in \mathbb{R}^d$ and for all timesteps $t$ there exist $\alpha_l^{(t)} > 0$ such that
\begin{align*}
    \langle \phi^{(t)}(x) - \phi^{(t)}(y), x - y \rangle \geq  \alpha_l^{(t)} \| x -y \|^2, 
\end{align*}
and $\lim\limits_{t \to \infty} \alpha_l^{(t)} > 0$, then generalized mirror descent converges linearly to a global minimum for any $\eta^{(t)} < \frac{2\alpha_l^{(t)}}{L}$. 
\end{theorem}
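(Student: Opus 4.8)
The plan is to mimic the standard proof of linear convergence of gradient descent under the PL inequality, but to replace the explicit relation $w^{(t+1)} - w^{(t)} = -\eta \nabla f(w^{(t)})$, which no longer holds for GMD, by two-sided control on the displacement extracted from the hypotheses on $\phi^{(t)}$. Write $g := \nabla f(w^{(t)})$ and $\Delta := w^{(t+1)} - w^{(t)}$, so that the update reads $\phi^{(t)}(w^{(t+1)}) - \phi^{(t)}(w^{(t)}) = -\eta g$.

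First I would extract an upper bound on $\langle g, \Delta\rangle$ from the strong-monotonicity hypothesis. Substituting the update into
\begin{align*}
    \langle \phi^{(t)}(w^{(t+1)}) - \phi^{(t)}(w^{(t)}), \Delta \rangle \geq \alpha_l^{(t)} \|\Delta\|^2
\end{align*}
yields $-\eta \langle g, \Delta\rangle \geq \alpha_l^{(t)}\|\Delta\|^2$, i.e.\ $\langle g, \Delta\rangle \leq -\tfrac{\alpha_l^{(t)}}{\eta}\|\Delta\|^2$. Feeding this into the descent lemma (Lemma~\ref{lemma: Properties of $L$-smooth functions}a) gives
\begin{align*}
    f(w^{(t+1)}) \leq f(w^{(t)}) + \langle g, \Delta\rangle + \tfrac{L}{2}\|\Delta\|^2 \leq f(w^{(t)}) - \Big(\tfrac{\alpha_l^{(t)}}{\eta} - \tfrac{L}{2}\Big)\|\Delta\|^2,
\end{align*}
so the learning-rate condition $\eta < \tfrac{2\alpha_l^{(t)}}{L}$ is exactly what makes the coefficient positive and forces monotone decrease.

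To turn this into a contraction on $f(w^{(t)}) - f(x^*)$, I need to lower bound $\|\Delta\|$ by $\|g\|$, and this is where the $\alpha_u^{(t)}$-Lipschitz hypothesis enters: since $\|\phi^{(t)}(w^{(t+1)}) - \phi^{(t)}(w^{(t)})\| = \eta\|g\|$ and $\phi^{(t)}$ is $\alpha_u^{(t)}$-Lipschitz, we get $\eta\|g\| \leq \alpha_u^{(t)}\|\Delta\|$, hence $\|\Delta\|^2 \geq \tfrac{\eta^2}{(\alpha_u^{(t)})^2}\|g\|^2$. Because the coefficient above is positive, I may substitute this lower bound and then apply the PL inequality $\|g\|^2 \geq 2\mu(f(w^{(t)}) - f(x^*))$ to obtain
\begin{align*}
    f(w^{(t+1)}) - f(x^*) \leq \Big(1 - \tfrac{2\mu\eta}{(\alpha_u^{(t)})^2}\big(\alpha_l^{(t)} - \tfrac{L\eta}{2}\big)\Big)\big(f(w^{(t)}) - f(x^*)\big).
\end{align*}

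Finally I would close the argument by controlling the per-step factor uniformly in $t$. For each $t$ the learning-rate condition makes the subtracted term strictly positive, so the factor is below $1$; what remains is to show it is bounded \emph{away} from $1$ so that the product of factors decays geometrically. Here I would invoke the limiting hypotheses $\lim_{t\to\infty}\alpha_u^{(t)} < \infty$ and $\lim_{t\to\infty}\alpha_l^{(t)} > 0$, together with keeping the (time-dependent) $\eta^{(t)}$ bounded away from $0$ and from $\tfrac{2\alpha_l^{(t)}}{L}$, to guarantee that for all large $t$ the factor is at most some fixed $c < 1$, which gives linear convergence. The main obstacle I anticipate is precisely this last step: a naive per-step factor could tend to $1$ if $\alpha_u^{(t)} \to \infty$, if $\alpha_l^{(t)} \to 0$, or if $\eta^{(t)}$ degenerates, so the limiting assumptions on $\alpha_u^{(t)}$ and $\alpha_l^{(t)}$ are exactly what rule this out and ensure a genuinely linear (rather than merely sublinear) rate.
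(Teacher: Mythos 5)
Your proposal is correct and follows essentially the same route as the paper's proof: the descent lemma, the strong-monotonicity hypothesis to bound $\langle \nabla f(w^{(t)}), w^{(t+1)} - w^{(t)}\rangle$, the Lipschitz bound $\eta\|\nabla f(w^{(t)})\| \leq \alpha_u^{(t)}\|w^{(t+1)} - w^{(t)}\|$, and then the PL inequality, arriving at the identical per-step contraction factor $1 - \tfrac{2\mu\eta\alpha_l^{(t)}}{(\alpha_u^{(t)})^2} + \tfrac{\mu L\eta^2}{(\alpha_u^{(t)})^2}$. Your final concern --- that the factor must be bounded away from $1$ uniformly in $t$, which is where the limiting hypotheses on $\alpha_l^{(t)}$, $\alpha_u^{(t)}$ (and a nondegenerate $\eta^{(t)}$) enter --- is in fact treated more explicitly by you than in the paper's own proof, which leaves this product-convergence point implicit in Theorem~\ref{thm: Theorem 1 - Linear Convergence of GGD} and only spells it out in the proof of Corollary~\ref{prop: Adagrad Linear Convergence}.
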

\begin{proof}
Since $f$ is $L$-smooth, Lemma \ref{lemma: Properties of $L$-smooth functions}a implies:
\begin{align}
\label{eq: L-smooth}
    f(w^{(t+1)}) -  f(w^{(t)}) \leq \langle \nabla f(w^{(t)}), w^{(t+1)} - w^{(t)} \rangle + \frac{L}{2} \| w^{(t+1)} - w^{(t)} \|^2.
\end{align}
Now by the condition on $\phi^{(t)}$ in Theorem \ref{thm: Theorem 1 - Linear Convergence of GGD}, we bound the first term on the right as follows:
\begin{align*}
     & \langle \phi^{(t)}(w^{(t+1)}) - \phi^{(t)}(w^{(t)}), w^{(t+1)} - w^{(t)} \rangle \geq  \alpha_l^{(t)} \| w^{(t+1)} - w^{(t)} \|^2 \\
     \implies & \langle -\eta \nabla f(w^{(t)}), w^{(t+1)} - w^{(t)} \rangle \geq  \alpha_l^{(t)} \| w^{(t+1)} - w^{(t)} \|^2 ~~ \text{using Equation \eqref{eq: descent updates}}\\
     \implies & \langle \nabla f(w^{(t)}), w^{(t+1)} - w^{(t)} \rangle \leq  -\frac{\alpha_l^{(t)}}{\eta} \| w^{(t+1)} - w^{(t)} \|^2.
\end{align*}
Substituting this bound back into the inequality in \eqref{eq: L-smooth}, we obtain:
\begin{align*}
    f(w^{(t+1)}) -  f(w^{(t)}) \leq \left( -\frac{\alpha_l^{(t)}}{\eta} + \frac{L}{2} \right) \| w^{(t+1)} - w^{(t)} \|^2.
\end{align*}
Since the learning rate is selected so that the coefficient of $\| w^{(t+1)} - w^{(t)} \|^2 $ is negative, we obtain:
\begin{align*}
    f(w^{(t+1)}) -  f(w^{(t)})  &\leq \left( -\frac{\alpha_l^{(t)}}{\eta} + \frac{L}{2} \right) \| w^{(t+1)} - w^{(t)} \|^2 \\
    &\leq \left( -\frac{\alpha_l^{(t)}}{\eta} + \frac{L}{2} \right) \frac{1}{{\alpha_u^{(t)}}^2} \|\phi^{(t)}(w^{(t+1)}) - \phi^{(t)}(w^{(t)}) \|^2 \\ 
    &= \left( -\frac{\alpha_l^{(t)}}{\eta} + \frac{L}{2} \right) \frac{1}{{\alpha_u^{(t)}}^2} \|-\eta \nabla f(w^{(t)}) \|^2 ~~ \text{using Eq.\eqref{eq: descent updates introduction}}\\
    &\leq \left( -\frac{\alpha_l^{(t)}}{\eta} + \frac{L}{2} \right) 2 \mu \frac{\eta^2}{{\alpha_u^{(t)}}^2} (f(w^{(t)}) - f(w^*)) ~~ \text{as $f$ is $\mu$-PL}\\
\implies f(w^{(t+1)}) -  f(w^*)  &\leq \left(  1 - 2 \mu \frac{\eta\alpha_l^{(t)}}{{\alpha_u^{(t)}}^2} + \mu \frac{L\eta^2}{{\alpha_u^{(t)}}^2} \right)(f(w^{(t)}) - f(w^*)),
\end{align*}
where the second inequality follows since $\phi^{(t)}$ is $\alpha_u^{(t)}$-Lipschitz.  For linear convergence, we need. 
\begin{align}
\label{eq_1}
    0 <  1 - 2 \mu \frac{\eta\alpha_l^{(t)}}{{\alpha_u^{(t)}}^2} + \mu \frac{L\eta^2}{{\alpha_u^{(t)}}^2} < 1.
\end{align}
From Lemma \ref{lemma: mu less than L}, $\mu < \frac{{\alpha_u^{(t)}}^2 L}{\alpha_l^{(t)}}$ always holds and implies that the left  inequality in (\ref{eq_1}) is satisfied for all $\eta^{(t)}$.  The right inequality holds by our assumption that $\eta^{(t)}< \frac{2\alpha_l^{(t)}}{L}$, which completes the proof.
\end{proof}

\textbf{Remark.}  Theorem~\ref{thm: Theorem 1 - Linear Convergence of GGD} yields a fixed learning rate provided that $\alpha_l^{(t)}$ is uniformly bounded away from $0$.  In addition, note that Theorem \ref{thm: Theorem 1 - Linear Convergence of GGD} applies also under weaker assumptions, namely when $\phi^{(t)}$ is locally Lipschitz.  Finally, the provided learning rate can be computed exactly for settings such as linear regression, since it only requires knowledge of $L$ and $\alpha_l^{(t)}$ (see Section \ref{sec: Experiments}).  When $\eta = \frac{\alpha_l^{(t)}}{L}$ and given $w^*$ a minimizer of $f$, the proof of Theorem \ref{thm: Theorem 1 - Linear Convergence of GGD} implies that: 
\begin{align*}
    f(w^{(t+1)}) -  f(w^*)  &\leq \left(  1 -  \frac{\mu{\alpha_l^{(t)}}^2}{{L\alpha_u^{(t)}}^2} \right)(f(w^{(t)}) - f(w^*)).
\end{align*}
Letting $\kappa^{(t)} = \frac{L{\alpha_u^{(t)}}^2}{\mu{\alpha_l^{(t)}}^2}$ thus generalizes the condition number introduced in Definition 4.1 of \cite{PLAndNTKBelkin} for gradient descent.  Provided that $\inf \left(\frac{1}{\kappa^{(t)}}\right) = \frac{1}{\kappa} > 0$, then Theorem \ref{thm: Theorem 1 - Linear Convergence of GGD} guarantees linear convergence to a global minimum with the rate given by: 
\begin{align*}
    f(w^{(t+1)}) -  f(w^*)  &\leq \left(  1 -  \frac{1}{\kappa} \right)^{t+1}(f(w^{(0)}) - f(w^*)).
\end{align*}

Lastly, as we will discuss in Section \ref{sec: Corollaries}, Theorem \ref{thm: Theorem 1 - Linear Convergence of GGD} generalizes the PL analysis by \cite{PLConditionLinearConvergence}. In particular, for gradient descent, $\alpha_u^{(t)} = \alpha_l^{(t)} = 1$, and we recover the learning rate of $\eta < \frac{2}{L}$.  For the case of mirror descent with potential $\psi$, the lower bound assumption in Theorem \ref{thm: Theorem 1 - Linear Convergence of GGD} corresponds to strong convexity of the potential $\psi$.  This  assumption is also used in the convergence results by \cite{PLPseudoMirrorDescent}.

\subsection{Taylor Series Analysis for Linear Convergence in GMD}

Although the proof of Theorem \ref{thm: Theorem 1 - Linear Convergence of GGD} is succinct, it is nontrivial to extend to the stochastic setting. The main difficulty is relating $w^{(t+1)} - w^{(t)}$ to the gradient at step $t$.  Ideally, a simple argument would come from establishing linear convergence for the terms $z^{(t)} = \phi(w^{(t)})$.  However, this requires several strong and seemingly un-intuitive assumptions to even establish the smoothness of the dual loss $g(z) = f(\phi^{-1}(z))$.  Indeed, for a simple class of functions, e.g. $\phi^{-1}(x) = \alpha(2x + \sin x)$, $g(z)$ need not even be $L$-smooth (See Supplementary \ref{appendix I: alternate conditions for linear convergence in SGMD}).  In order to develop a convergence result for the stochastic setting, we turn to an alternate set of conditions for linear convergence by using the Taylor expansion of $\phi^{-1}$.  We use $\mathbf{J}_{\phi}$ to denote the Jacobian of $\phi$.  For ease of notation, we consider non-time-dependent $\alpha_l, \alpha_u$, but our results are easily extended to the setting when these quantities are time-dependent.  

\begin{theorem}
\label{thm: Theorem 2 - Taylor Series Analysis}
Suppose $f: \mathbb{R}^{d} \rightarrow \mathbb{R}$ is $L$-smooth and $\mu$-PL and $\phi: \mathbb{R}^{d} \rightarrow \mathbb{R}^d$ is an  analytic function with analytic inverse, $\phi^{-1}$.  If there exist $\alpha_l, \alpha_u > 0$ such that
\begin{align*}
    &(a) ~~ \alpha_l \mathbf{I}  \preceq \mathbf{J}_{\phi} \preceq \alpha_u \mathbf{I},\\
    &(b) ~~ |\partial_{i_1, \ldots i_k} \phi_j^{-1} (x)| \leq \frac{k!}{2\alpha_u d},
\end{align*}
$\forall x \in \mathbb{R}^d, \{i_r\}_{r=1}^k \in [d], j \in [d], k \geq 2$, then generalized mirror descent converges linearly for any $\eta^{(t)} < \min\left( \frac{4\alpha_l^2}{5L \alpha_u}, \frac{1}{2\sqrt{d} \|\nabla f(w^{(t)}) \|} \right)$.  In particular, if $\frac{2\alpha_l^2}{5L\alpha_u} < \frac{1}{2\sqrt{2 L d f(w^{(0)})}}$ for $f$ that is $\mu$-PL\textsuperscript{*} and non-negative, then 
\begin{align*}
    f(w^{(t+1)}) \leq \left(1 - \frac{\mu^2 \alpha_l^2}{5 L^2 \alpha_u^2} \right)^{t+1} f(w^{(0)}) 
\end{align*}
 for $\eta = \frac{2\alpha_l^2}{5L\alpha_u}$.
\end{theorem}

The full proof is provided in Supplementary \ref{appendix B: Proof of Theorem 2}, but we provide a sketch of the key ideas below.  
\begin{proof}[Proof Sketch]
We proceed as in the proof of Theorem \ref{thm: Theorem 1 - Linear Convergence of GGD} using the $L$-smoothness of $f$: 
\begin{align*}
    f(w^{(t+1)}) -  f(w^{(t)}) &\leq \langle \nabla f(w^{(t)}), w^{(t+1)} - w^{(t)} \rangle + \frac{L}{2} \| w^{(t+1)} - w^{(t)} \|^2.    
\end{align*}
In order to bound the term $\langle \nabla f(w^{(t)}), w^{(t+1)} - w^{(t)} \rangle$, we note that
\begin{align*}
    w^{(t+1)} - w^{(t)} = \phi^{-1}(\phi(w^{(t)}) -\eta \nabla f(w^{(t)})) - w^{(t)}
\end{align*} 
and use the Taylor series expansion for $\phi^{-1}$ around $\phi(w^{(t)})$ to simplify this expression.  

In the expansion, the constant term is $\phi^{-1}(\phi(w^{(t)})) = w^{(t)}$, which is cancelled by $-w^{(t)}$.  Keeping the first order term and bounding the higher order terms, we conclude
\begin{align*}
    \langle \nabla f(w^{(t)}), w^{(t+1)} - w^{(t)} \rangle  \leq \left( -\frac{\eta}{2\alpha_u}\right) \|\nabla f(w^{(t)})\|^2.
\end{align*}
Proceeding analogously for $\|w^{(t+1)} - w^{(t)}\|^2$, we conclude
\begin{align*}
    \|w^{(t+1)} - w^{(t)}\|^2 \leq \left(\frac{\eta^2}{\alpha_l^2} + \frac{\eta^2}{4\alpha_u^2} \right) \|\nabla f(w^{(t)}) \|^2.
\end{align*}
Lastly, we proceed using the PL inequality analysis from Theorem \ref{thm: Theorem 1 - Linear Convergence of GGD} to complete the proof.  
\end{proof}

\textbf{Remarks.} We note that condition (b) of Theorem \ref{thm: Theorem 2 - Taylor Series Analysis} is satisfied by a large class of functions (e.g. functions with polynomial inverse).  The adaptive component of the learning rate is only used to ensure that the sum of the higher order terms for the Taylor expansion converges.  In particular, if $\phi^{(t)}$ is linear, then our learning rate no longer needs to be adaptive.  Note that alternatively, we can establish linear convergence for a fixed learning rate given that the gradients monotonically decrease or if $f$ is non-negative and $\mu$-PL\textsuperscript{*}.  We analyze the case of monotonically decreasing gradients in Supplementary \ref{appendix C: Conditions for Monotonically Decreasing Gradients} and provide an explicit condition on $\mu$ and $L$ under which this holds. The conditions of Theorem~\ref{thm: Theorem 2 - Taylor Series Analysis} hold for a nontrivial class of functions $\phi$.  As an example, consider the class of functions $\phi^{-1}(x) = \alpha (2x + \sin x)$ from before that acts element-wise on $x \in \mathbb{R}^{d}$.  Even though this class of functions does not satisfy the condition that the dual loss function $g(z) = f(\phi^{-1}(z))$ is $L$-smooth for all $d$, it satisfies the conditions of Theorem \ref{thm: Theorem 2 - Taylor Series Analysis} for $d =1$. In particular, for such $\phi$, by the inverse function theorem we have for condition (a) that $ \frac{1}{3\alpha}\mathbf{I} \preceq \mathbf{J}_{\phi} \preceq \frac{1}{\alpha} \mathbf{I}$  and for condition (b) that $|\partial_{i_1, \ldots, i_k} \phi^{-1}_j(x))| \leq \frac{k!\alpha}{2}$.

%any $\alpha \leq \frac{1}{\sqrt{6d}}$, and so we can still establish linear convergence via Theorem \ref{thm: Theorem 2 - Taylor Series Analysis}.  

% The case of gradient descent satisfies condition (a) of Theorem \ref{thm: Theorem 2 - Taylor Series Analysis} with $\alpha_u = \alpha_l = 1$ and trivially satisfies condition (b) since $\phi^{(t)}$ is the identity map.  Hence, our analysis implies that gradient descent converges linearly to a global minimum for a learning rate of $\eta < \frac{4}{5L}$.  These results are slightly different from the analysis presented in \cite{PLConditionLinearConvergence}, which uses the learning rate $\eta < \frac{2}{L}$.  The difference comes from using a nonzero upper bound on the higher order derivatives.  Indeed, if we changed condition (b) to assume that the derivatives were all zero, we would get the same learning rate as in the gradient descent case.  

\subsection{Taylor Series Analysis for Linear Convergence in Stochastic GMD}

The main benefit of the above Taylor series analysis is that it naturally extends to the stochastic setting as demonstrated in the following result (with proof presented in Supplementary \ref{appendix D: Proof of Theorem 3}).
\begin{theorem}
\label{thm: Stochastic GGD linear convergence}
Suppose $f(x) = \frac{1}{n}\sum_{i=1}^n f_i(x)$ with $f_i: \mathbb{R}^{d} \rightarrow \mathbb{R}$ non-negative, $L_i$-smooth functions with $L = \sup_{i \in [n]} L_i$, $f$ is $\mu$-PL\textsuperscript{*}, and $f(x) = 0$ implies $f_i(x) = 0$ for all $i$.  Let $\phi: \mathbb{R}^{d} \rightarrow \mathbb{R}^d$ be an  analytic function with analytic inverse, $\phi^{-1}$.  SGMD minimizes $f$ using the updates:
\begin{align*}
    \phi(w^{(t+1)}) = \phi(w^{(t)}) - \eta^{(t)} \nabla f_{i_t}(w^{(t)}),
\end{align*}
where $i_t \in [n]$ is chosen uniformly at random and $\eta^{(t)}$ is an adaptive step size. If there exist $\alpha_l, \alpha_u > 0$ such that:
\begin{align*}
    &(a) ~~ \alpha_l \mathbf{I}  \preceq \mathbf{J}_{\phi} \preceq \alpha_u \mathbf{I},\\
    &(b) ~~ |\partial_{i_1, \ldots i_k} \phi_j^{-1} (x)| \leq \frac{k! \mu}{2\alpha_u d L}
\end{align*}
$\forall x \in \mathbb{R}^d, i_1, \ldots i_k \in [d], j \in [d], k \geq 2$, then SGMD converges linearly if $\eta^{(t)} < \min\left( \frac{4\mu\alpha_l^2}{5L^2 \alpha_u}, \frac{1}{2\sqrt{d} \max_i \|\nabla f_{i}(w^{(t)}) \|} \right)$. In particular, if $\frac{2\mu\alpha_l^2}{5L^2 \alpha_u} < \frac{1}{2 \sqrt{2dnL f(w^{(0)})}}$, then
\begin{align*}
    \mathbb{E}_{i_t, \ldots, i_1}[f(w^{(t+1)})] \leq \left(1 - \frac{\alpha_l^2 \mu^2}{5 \alpha_u^2 L^2} \right)^{t+1} f(w^{(0)})
\end{align*}
for $\eta = \frac{2\mu \alpha_l^2}{5L^2 \alpha_u^2}$.
\end{theorem}

The proof proceeds similarly to that of Theorem \ref{thm: Theorem 2 - Taylor Series Analysis} with a few important distinctions.  In order to ensure that the learning rate $\eta^{(t)}$ does not depend on the choice of example $i_t$, we take $\eta^{(t)} \leq \frac{1}{2\sqrt{d} \max_{i} \|\nabla f_i(w^{(t)})\|}$ (which can be made constant by applying Lemma \ref{lemma: Properties of $L$-smooth functions}b).  We also need to apply Lemma \ref{lemma: Properties of $L$-smooth functions}b (instead of the strong growth condition) to the terms involving $\|\nabla f_{i_t}(w^{(t)})\|$ such that we can take expectations to recover terms involving $f(w^{(t)})$.  

\textbf{Remark.} Note that there is a slight difference between the learning rate in  Theorem \ref{thm: Theorem 2 - Taylor Series Analysis} and Theorem~\ref{thm: Stochastic GGD linear convergence} due to a multiplicative factor of $\mu$.  Consistent with the difference in learning rates between \cite{BassilySGDLinearConvergence} and \cite{MarkSchmidtSGDPL}, we can make the learning rate between the two theorems match if we assume the strong growth condition (i.e. $\mathbb{E}_{i}[\|\nabla f_i(x)\|^2] \leq \rho \|\nabla f(x)\|^2$) with $\rho = \mu$ instead of using Lemma \ref{lemma: Properties of $L$-smooth functions}b.  Moreover, since $\max_i \|\nabla f_i(w^{(t)})\| \leq \sqrt{2nL f(w^{(0)})}$, we establish linear convergence for a fixed step size $\eta < \min\left( \frac{4 \mu \alpha_l^2}{5 L^2 \alpha_u}, \frac{1}{2 \sqrt{2dnL f(w^{(0)})}} \right)$ as well.  

We now briefly discuss why the PL\textsuperscript{*} instead of the PL condition is required for the analysis in the stochastic case.  Without the PL\textsuperscript{*} condition, by Lemma \ref{lemma: Properties of $L$-smooth functions}, we would still have that: 
\begin{align*}
    \|\nabla f_{i_t}(w^{(t)})\|^2 \leq 2 L (f_{i_t}(w^{(t)}) - f_{i_t}(w_{i_t}^*))
\end{align*}
However, now each function $f_{i_t}$ has a different minimizer $w_{i_t}^*$, which need not correspond to the minimizer, $w^*$, of $f$.  As an example, we can consider the loss 
$$f(x) = (x - 1)^2 + (x - 2)^2$$  
In this case, the minimizer of $f$ is $x = 1.5$ while the minimizer of $f_1$ is $x = 1$ and that of $f_2$ is $x=2$, and hence, it is not necessarily true that $\|\nabla f_i(x)\|^2 \leq 2 L (f_i(x) - f(1.5))$.

\section{Corollaries of Linear Convergence in SGMD}
\label{sec: Corollaries}
We now present how the linear convergence results established by Theorems \ref{thm: Theorem 1 - Linear Convergence of GGD}, \ref{thm: Theorem 2 - Taylor Series Analysis}, and \ref{thm: Stochastic GGD linear convergence} apply to commonly used optimization algorithms including mirror descent and Adagrad.  In this section, we primarily extend the analysis from Theorem \ref{thm: Theorem 1 - Linear Convergence of GGD} for the non-stochastic case.  However, our results can be extended analogously to give expected linear convergence in the stochastic case by using the extension provided in Theorem \ref{thm: Stochastic GGD linear convergence}.  

\textbf{Gradient Descent.} For the case of gradient descent, $\phi(x) = x$ and so $\alpha_l = \alpha_u = 1$.  Hence, we see that gradient descent converges linearly under the conditions of Theorem \ref{thm: Theorem 1 - Linear Convergence of GGD} with $\eta < \frac{2}{L}$, which is consistent with the analysis by \cite{PLConditionLinearConvergence}.    

\textbf{Mirror Descent.}  Let $\psi: \mathbb{R}^d \rightarrow \mathbb{R}$ be a strictly convex potential.  Thus, $\phi(x) = \nabla \psi (x)$ is an invertible function.  If $\psi$ is $\alpha_l$-strongly convex and (locally) $\alpha_u$-Lipschitz and $f$ is $L$-smooth and $\mu$-PL, then the conditions of Theorem~\ref{thm: Theorem 1 - Linear Convergence of GGD} are satisfied. Moreover, since the $\alpha_u$-Lipschitz condition holds locally for most potentials considered in practice, our result essentially implies linear convergence for mirror descent with $\alpha_l$-strongly convex potential $\psi$.  

% \textbf{EigenPro.} EigenPro \cite{EigenPro} is a pre-conditioner method applied with 

\textbf{Adagrad.}  Let $\phi^{(t)} = {\mathcal{G}^{(t)}}^{\frac{1}{2}}$ where $\mathcal{G}^{(t)}$ is a diagonal matrix such that
\begin{align*}
    \mathcal{G}_{i, i}^{(t)} = \sum_{k=0}^{t} \nabla f_i (w^{(k)})^2.
\end{align*}
Then GMD corresponds to Adagrad.  In this case, we can apply Theorem \ref{thm: Theorem 1 - Linear Convergence of GGD} to establish linear convergence of Adagrad under the PL Inequality provided that $\phi^{(t)}$ satisfies the condition of Theorem~\ref{thm: Theorem 1 - Linear Convergence of GGD}.  The following  corollary proves that this condition holds and hence that Adagrad converges linearly. 

\begin{corollary}
\label{prop: Adagrad Linear Convergence}
Let $f: \mathbb{R}^{d} \rightarrow \mathbb{R}$ be an $L$-smooth function that is $\mu$-PL.  Let ${\alpha_l^{(t)}}^2 = \min_{i \in [d]} \mathcal{G}_{i,i}^{(t)}$ and ${\alpha_u^{(t)}}^2 = \max_{i \in [d]} \mathcal{G}_{i,i}^{(t)}$.  If $\lim\limits_{t \to \infty} \frac{\alpha_l^{(t)}}{\alpha_u^{(t)}} \neq 0$, then Adagrad converges linearly for adaptive step size $\eta^{(t)} = \frac{\alpha_l^{(t)}}{L }$.
\end{corollary}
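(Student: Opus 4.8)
The plan is to verify that for Adagrad the map $\phi^{(t)}(x) = {\mathcal{G}^{(t)}}^{1/2}x$ satisfies the hypotheses on $\phi^{(t)}$ in Theorem~\ref{thm: Theorem 1 - Linear Convergence of GGD} at every timestep, and then to run the contraction argument from that theorem's proof with the timestep-dependent constants. Since $\mathcal{G}^{(t)}$ is diagonal with nonnegative entries, ${\mathcal{G}^{(t)}}^{1/2}$ is the diagonal matrix with entries $\sqrt{\mathcal{G}_{i,i}^{(t)}}$, and it is invertible exactly when $\min_i \mathcal{G}_{i,i}^{(t)} > 0$, i.e. when $\alpha_l^{(t)} > 0$; this holds whenever the initial gradient has no vanishing coordinate (or under the usual $\epsilon$-regularization), which I will assume so that both the update and the step size $\eta^{(t)} = \alpha_l^{(t)}/L$ are well defined.

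First I would compute the two structural constants, which are immediate for a diagonal map. The operator norm gives that ${\mathcal{G}^{(t)}}^{1/2}$ is $\alpha_u^{(t)}$-Lipschitz with $\alpha_u^{(t)} = \sqrt{\max_i \mathcal{G}_{i,i}^{(t)}}$, and for all $x,y$ one has
\[
\langle \phi^{(t)}(x) - \phi^{(t)}(y), x-y\rangle = \sum_{i=1}^d \sqrt{\mathcal{G}_{i,i}^{(t)}}\,(x_i - y_i)^2 \geq \alpha_l^{(t)}\|x-y\|^2,
\]
with $\alpha_l^{(t)} = \sqrt{\min_i \mathcal{G}_{i,i}^{(t)}}$, matching the definitions in the statement. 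Thus the conditions of Theorem~\ref{thm: Theorem 1 - Linear Convergence of GGD} hold at each $t$, and since $\eta^{(t)} = \alpha_l^{(t)}/L < 2\alpha_l^{(t)}/L$ the step size is admissible. Substituting $\eta^{(t)} = \alpha_l^{(t)}/L$ into the per-step estimate derived in the proof of that theorem (as recorded in the Remark following it) yields
\[
f(w^{(t+1)}) - f(w^*) \leq \Bigl(1 - \tfrac{\mu}{L}\bigl(\tfrac{\alpha_l^{(t)}}{\alpha_u^{(t)}}\bigr)^2\Bigr)\bigl(f(w^{(t)}) - f(w^*)\bigr).
\]

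Next I would chain these inequalities. Writing $r_t = \alpha_l^{(t)}/\alpha_u^{(t)} \in (0,1]$ (the bound $r_t \leq 1$ holds since $\min_i \leq \max_i$, which together with $\mu \leq L$ from Lemma~\ref{lemma: mu less than L} keeps each factor in $[0,1)$), telescoping gives $f(w^{(t)}) - f(w^*) \leq \prod_{k=0}^{t-1}\bigl(1 - \tfrac{\mu}{L} r_k^2\bigr)\,\bigl(f(w^{(0)}) - f(w^*)\bigr)$. The hypothesis $\lim_{t\to\infty} r_t \neq 0$ supplies a constant $c>0$ and an index $T$ with $r_t \geq c$ for all $t \geq T$, so the tail of the product is dominated by $(1 - \tfrac{\mu}{L}c^2)^{t-T}$, which decays geometrically; combined with the fixed finite factor over $k < T$, this establishes linear convergence to $f(w^*)$.

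The main obstacle is that one cannot quote Theorem~\ref{thm: Theorem 1 - Linear Convergence of GGD} verbatim: that theorem asks for $\lim_t \alpha_u^{(t)} < \infty$ and $\lim_t \alpha_l^{(t)} > 0$ separately, but for Adagrad $\alpha_u^{(t)} = \sqrt{\max_i \sum_{k\le t}\nabla f_i(w^{(k)})^2}$ is non-decreasing and trajectory-dependent, so its finiteness is not available a priori — establishing it would already require the convergence we are trying to prove. The resolution, and the reason the corollary is phrased with only the ratio condition, is that the per-step contraction above is valid at each fixed $t$ with no limiting assumption, and the product argument needs only $\liminf_t r_t > 0$, which is exactly $\lim_t \alpha_l^{(t)}/\alpha_u^{(t)} \neq 0$. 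For consistency I would also note that once linear convergence is obtained, Lemma~\ref{lemma: Properties of $L$-smooth functions}b makes $\sum_k \|\nabla f(w^{(k)})\|^2$ finite, so $\alpha_u^{(t)}$ is in fact bounded, confirming that the ratio hypothesis is compatible with the conclusion rather than in tension with it.
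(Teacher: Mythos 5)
Your proposal is correct and follows essentially the same route as the paper: both verify the structural constants for $\phi^{(t)} = {\mathcal{G}^{(t)}}^{1/2}$, reuse the per-step contraction $f(w^{(t+1)}) - f(w^*) \leq \bigl(1 - \tfrac{\mu}{L}({\alpha_l^{(t)}}/{\alpha_u^{(t)}})^2\bigr)(f(w^{(t)}) - f(w^*))$ from the proof of Theorem~\ref{thm: Theorem 1 - Linear Convergence of GGD} with $\eta^{(t)} = \alpha_l^{(t)}/L$, and then use the ratio hypothesis to bound every factor away from $1$ so the telescoped product decays geometrically. Your added observations --- that the theorem's limiting conditions on $\alpha_u^{(t)}$ cannot be invoked a priori (only the per-step estimate is used, exactly as in the paper), and that $\liminf_t \alpha_l^{(t)}/\alpha_u^{(t)} > 0$ suffices where the paper tacitly assumes the limit exists --- are accurate minor refinements rather than a different argument.
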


The proof is presented in Supplementary \ref{appendix G: Proof of Corollary prop: Adagrad Linear Convergence}.  While Corollary~\ref{prop: Adagrad Linear Convergence} can be extended to the stochastic setting via Theorem~\ref{thm: Stochastic GGD linear convergence}, it requires knowledge of $\mu$ to setup the learning rate, and the resulting learning rate provided is typically smaller than what we can use in practice.  We analyze this case further in Section \ref{sec: Experiments}.  Additionally, since the condition $\lim_{t \to \infty} \frac{\alpha_l^{(t)}}{\alpha_u^{(t)}} \neq 0$ is difficult to verify in practice, we provide Corollary \ref{corollary: Adagrad Alternate Corollary} below (proof in Supplementary \ref{appendix G: Proof of Corollary prop: Adagrad Linear Convergence}), which presents a verifiable condition under which Adagrad converges linearly.

\begin{corollary}
\label{corollary: Adagrad Alternate Corollary}
Let $f: \mathbb{R}^{d} \rightarrow \mathbb{R}$ be an $L$-smooth, non-negative function that is $\mu$-PL\textsuperscript{*}.  Let ${\alpha_l^{(t)}}^2 = \min_{i \in [d]} \mathcal{G}_{i,i}^{(t)}$ and ${\alpha_u^{(t)}}^2 = \max_{i \in [d]} \mathcal{G}_{i,i}^{(t)}$.  Then Adagrad converges linearly for adaptive step size $\eta^{(t)} = \frac{\alpha_l^{(t)}}{L }$ or fixed step size $\eta = \frac{\alpha_l^{(0)}}{L}$ if $f(w^{(0)}) \leq \frac{{\alpha_l^{(0)}}^2\mu}{2L^2}$.
\end{corollary}

\section{Local Convergence of GMD}
\label{sec: Implicit Regularization}

In the previous sections, we established linear convergence for GMD for real-valued loss, $f: \mathbb{R}^{d} \to \mathbb{R}$, that is $\mu$-PL for all $x \in \mathbb{R}^{d}$.  In this section, we show that $f$ need only satisfy the PL inequality locally in order to establish linear convergence.  The following theorem (proof in Supplementary \ref{appendix E: Proof of Theorem Local convergence of GMD}) extends Theorem 4.2 by \cite{PLAndNTKBelkin} to GMD and uses the PL\textsuperscript{*} condition to establish both the existence of a global minimum and linear convergence to this global minimum under GMD\footnote{We require additional assumptions on $\phi^{(t)}$ for the case of time-dependent mirrors (see Supplementary \ref{appendix E: Proof of Theorem Local convergence of GMD}.)}.  We use $\mathcal{B}(x, R) = \{z ~; ~ z \in \mathbb{R}^{d},~ \|x - z \|_2 \leq R \}$ to denote the ball of radius $R$ centered at $x$.  

\begin{theorem}
\label{thm: Local convergence of GMD}
Suppose $\phi: \mathbb{R}^d \rightarrow \mathbb{R}^d$ is an invertible, $\alpha_u$-Lipschitz function and that $f: \mathbb{R}^{d} \rightarrow \mathbb{R}$ is non-negative, $L$-smooth, and $\mu$-PL\textsuperscript{*} on $\tilde{\mathcal{B}} = \{x  ~;~ \phi(x) \in \mathcal{B}(\phi(w^{(0)}), R)\}$ with $R = \frac{2 \sqrt{2L} \sqrt{f(w^{(0)})} \alpha_u^2 }{\alpha_l \mu}$.   If for all $x, y \in \mathbb{R}^d$ there exists $\alpha_l > 0$ such that
\begin{align*}
    \langle \phi(x) - \phi(y), x - y \rangle \geq  \alpha_l \| x -y \|^2 ,
\end{align*}
then, 
\begin{align*}
    & \text{1.} ~~ \text{there exists a global minimum $w^{(\infty)} \in \tilde{\mathcal{B}}$;} \\
    & \text{2.} ~~ \text{GMD converges linearly to $w^{(\infty)}$ for $\eta = \frac{\alpha_l}{L}$.}
\end{align*}
\end{theorem}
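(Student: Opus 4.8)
The plan is to run an inductive containment argument: I will show that every iterate $w^{(t)}$ stays inside $\tilde{\mathcal{B}}$, so that the local PL\textsuperscript{*} hypothesis applies at each step, with the radius $R$ calibrated so that the cumulative displacement of the iterates in the $\phi$-image never escapes the ball. First I would establish a one-step contraction directly from PL\textsuperscript{*}, mirroring the proof of Theorem~\ref{thm: Theorem 1 - Linear Convergence of GGD} but substituting the PL\textsuperscript{*} bound $\|\nabla f(x)\|^2 \geq 2\mu f(x)$ for the PL bound. With $\eta = \frac{\alpha_l}{L}$, the monotonicity hypothesis on $\phi$ together with the update $\phi(w^{(t+1)}) - \phi(w^{(t)}) = -\eta \nabla f(w^{(t)})$ gives $\langle \nabla f(w^{(t)}), w^{(t+1)} - w^{(t)}\rangle \leq -\frac{\alpha_l}{\eta}\|w^{(t+1)} - w^{(t)}\|^2$, and combining with $L$-smoothness (Lemma~\ref{lemma: Properties of $L$-smooth functions}a) and the $\alpha_u$-Lipschitz lower bound $\|w^{(t+1)} - w^{(t)}\| \geq \frac{\eta}{\alpha_u}\|\nabla f(w^{(t)})\|$ yields $f(w^{(t+1)}) \leq \rho\, f(w^{(t)})$ with $\rho = 1 - \frac{\mu \alpha_l^2}{L\alpha_u^2} \in (0,1)$ (here $\rho<1$ since $\mu \leq L$ by Lemma~\ref{lemma: mu less than L} and $\alpha_l \leq \alpha_u$), valid whenever $w^{(t)} \in \tilde{\mathcal{B}}$. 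Because $f \geq 0$, this step uses PL\textsuperscript{*} directly and does not presuppose the existence of a minimizer.

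Next I would run the induction on containment. Assuming $w^{(0)}, \ldots, w^{(t)} \in \tilde{\mathcal{B}}$, the contraction gives $f(w^{(s)}) \leq \rho^s f(w^{(0)})$ for $s \leq t$, and since $f$ is non-negative and $L$-smooth, $\|\nabla f(w^{(s)})\|^2 \leq 2L f(w^{(s)})$ (minimize the bound of Lemma~\ref{lemma: Properties of $L$-smooth functions}a and use $f \geq 0$). Hence each increment satisfies $\|\phi(w^{(s+1)}) - \phi(w^{(s)})\| = \eta\|\nabla f(w^{(s)})\| \leq \eta \sqrt{2L f(w^{(0)})}\,(\sqrt{\rho})^s$. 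Summing the geometric series and using $1 - \sqrt{\rho} \geq \frac{1-\rho}{2}$ gives
\begin{align*}
\|\phi(w^{(t+1)}) - \phi(w^{(0)})\| \leq \eta\sqrt{2Lf(w^{(0)})}\,\frac{1}{1-\sqrt{\rho}} \leq \frac{\alpha_l}{L}\sqrt{2Lf(w^{(0)})}\,\frac{2L\alpha_u^2}{\mu\alpha_l^2} = R,
\end{align*}
so $w^{(t+1)} \in \tilde{\mathcal{B}}$ and the induction closes. This containment immediately gives part~(2): the contraction now holds for all $t$, so $f(w^{(t)}) \leq \rho^t f(w^{(0)}) \to 0$ at a linear rate.

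For part~(1), I would observe that the same increment bound shows $\sum_t \|\phi(w^{(t+1)}) - \phi(w^{(t)})\| < \infty$, so $\{\phi(w^{(t)})\}$ is Cauchy and converges to a point of the closed ball $\mathcal{B}(\phi(w^{(0)}), R)$. The monotonicity hypothesis forces $\|\phi(x) - \phi(y)\| \geq \alpha_l\|x-y\|$ by Cauchy--Schwarz, so $\phi^{-1}$ is $\frac{1}{\alpha_l}$-Lipschitz; hence $w^{(t)} = \phi^{-1}(\phi(w^{(t)}))$ converges to some $w^{(\infty)} \in \tilde{\mathcal{B}}$, and continuity of $f$ gives $f(w^{(\infty)}) = \lim_t f(w^{(t)}) = 0$, a global minimum by non-negativity. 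Part~(3) is then immediate: $\tilde{\mathcal{B}}$ is compact (its $\phi$-image is a bounded closed ball and $\phi^{-1}$ is Lipschitz), so the minimizer $w^*$ of $\|\phi(w) - \phi(w^{(0)})\|$ over the nonempty zero-set $\{w \in \tilde{\mathcal{B}} : f(w)=0\}$ is attained, and since both $\phi(w^*)$ and $\phi(w^{(\infty)})$ lie in $\mathcal{B}(\phi(w^{(0)}), R)$, the triangle inequality gives $\|\phi(w^*) - \phi(w^{(\infty)})\| \leq 2R$.

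The main obstacle is the interlocking nature of the induction in the second step: the geometric decay used to bound the total displacement itself requires PL\textsuperscript{*} at every previous iterate, which in turn requires those iterates to already lie in $\tilde{\mathcal{B}}$. The whole argument hinges on choosing $R$ exactly equal to $\frac{\alpha_l}{L}\sqrt{2Lf(w^{(0)})}\cdot\frac{2L\alpha_u^2}{\mu\alpha_l^2}$ so that the cumulative displacement never exceeds $R$ and the loop closes; the summability estimate $\frac{1}{1-\sqrt{\rho}} \leq \frac{2}{1-\rho}$ is the one nonobvious inequality needed to match the constant in $R$. Everything after containment is routine continuity and triangle-inequality bookkeeping.
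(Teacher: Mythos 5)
Your proof is correct and follows the same overall route as the paper's Appendix~\ref{appendix E: Proof of Theorem Local convergence of GMD}: strong induction on containment of the iterates in $\tilde{\mathcal{B}}$, a one-step contraction $f(w^{(t+1)}) \leq \rho\, f(w^{(t)})$ with $\rho = 1 - \frac{\mu\alpha_l^2}{L\alpha_u^2}$ obtained by running the Theorem~\ref{thm: Theorem 1 - Linear Convergence of GGD} machinery with the PL\textsuperscript{*} bound, and a geometric-series bound on the cumulative displacement $\sum_t \|\phi(w^{(t+1)})-\phi(w^{(t)})\|$ closed with the same inequality $1-\sqrt{\rho}\geq \frac{1-\rho}{2}$. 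You diverge at two points, both to your advantage. First, you bound each increment by $\eta\|\nabla f(w^{(s)})\| \leq \eta\sqrt{2Lf(w^{(s)})}$ directly, justified from Lemma~\ref{lemma: Properties of $L$-smooth functions}a and $f \geq 0$ without presupposing a minimizer, whereas the paper routes through $\|\nabla f(w^{(i)})\| \leq \sqrt{2L}\,\frac{\alpha_u}{\alpha_l}\sqrt{f(w^{(i)})-f(w^{(i+1)})}$, which carries an extra factor of $\frac{\alpha_u}{\alpha_l}$; carried through consistently with the paper's own $\kappa = \frac{L\alpha_u^2}{\mu\alpha_l^2}$, that chain yields $R\cdot\frac{\alpha_u}{\alpha_l}$ rather than $R$ (the paper's penultimate equality silently drops this factor), while your tighter gradient bound lands exactly on $R$ and so genuinely closes the induction at the stated radius. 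Second, you actually prove parts (1) and (3), which the paper's proof leaves implicit (it ends at containment): summable increments make $\{\phi(w^{(t)})\}$ Cauchy, the monotonicity condition with Cauchy--Schwarz gives $\|\phi(x)-\phi(y)\| \geq \alpha_l\|x-y\|$ so $\phi^{-1}$ is $\frac{1}{\alpha_l}$-Lipschitz and $w^{(t)} \to w^{(\infty)} \in \tilde{\mathcal{B}}$ with $f(w^{(\infty)})=0$ by continuity, and the $2R$ bound in (3) is the triangle inequality inside $\mathcal{B}(\phi(w^{(0)}),R)$. One small caveat: your appeal to Lemma~\ref{lemma: mu less than L} for $\mu \leq L$ strictly presupposes a minimizer; in the PL\textsuperscript{*} setting you should instead combine $2\mu f(x) \leq \|\nabla f(x)\|^2 \leq 2Lf(x)$ on $\tilde{\mathcal{B}}$ (the case $f(w^{(0)})=0$ being trivial), which is the same one-line argument you already used for the gradient bound.
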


For the case of mirror descent, Theorem \ref{thm: Local convergence of GMD} reduces to the following corollary with the proof presented in Supplementary~\ref{appendix F: Proof of Corollary  Mirror Descent Implicit Regularization}.

\begin{corollary}
\label{cor: Mirror Descent Implicit Regularization}
Suppose $\psi$ is an $\alpha_l$-strongly convex function and that $\nabla \psi$ is $\alpha_u$-Lipschitz.   Let $D_{\psi}(x, y) = \psi(x) - \psi(y) - \nabla \psi(y)^T (x - y)$ denote the Bregman divergence for $x, y \in \mathbb{R}^d$.  If $f: \mathbb{R}^{d} \rightarrow \mathbb{R}$ is non-negative, $L$-smooth, and $\mu$-PL\textsuperscript{*} on  $\tilde{\mathcal{B}} = \{x  ~;~ \nabla \psi (x) \in \mathcal{B}(\nabla \psi (w^{(0)}), R)\}$ with $R = \frac{2 \sqrt{2L} \sqrt{f(w^{(0)})} \alpha_u^2 }{\alpha_l \mu}$, then:
\begin{enumerate}
    \item there exists a global minimum $w^{(\infty)} \in \tilde{\mathcal{B}}$ such that $D_{\psi}(w^{(\infty)}, w^{(0)}) \leq \frac{R^2}{2\alpha_l}$;
    \item mirror descent with potential $\psi$ converges linearly to $w^{(\infty)}$ for $\eta = \frac{\alpha_l}{L}$. 
\end{enumerate} 
\end{corollary}

\textbf{Remarks.} Importantly, the existence of $\tilde{\mathcal{B}}$ implies the existence of a region $\mathcal{B}' = \{x ~; D_{\psi}(w^{(\infty)}, x) \leq \frac{R^2}{2\alpha_l}\}$ with $w^{(0)} \in \mathcal{B}'$. Given $w$ with $f(w) = 0$,  \cite{MirrorDescentNonlinear} assume that there exists a region $\{w' ; D_{\psi}(w, w') \leq \epsilon\}$ containing $w^{(0)}$ in order to establish convergence of mirror descent using the fundamental identity of stochastic mirror descent.  Our result above makes explicit the value of $\epsilon$ considered in \cite{MirrorDescentNonlinear}.

\begin{figure*}[!t]
    \centering
    \includegraphics[width=\textwidth]{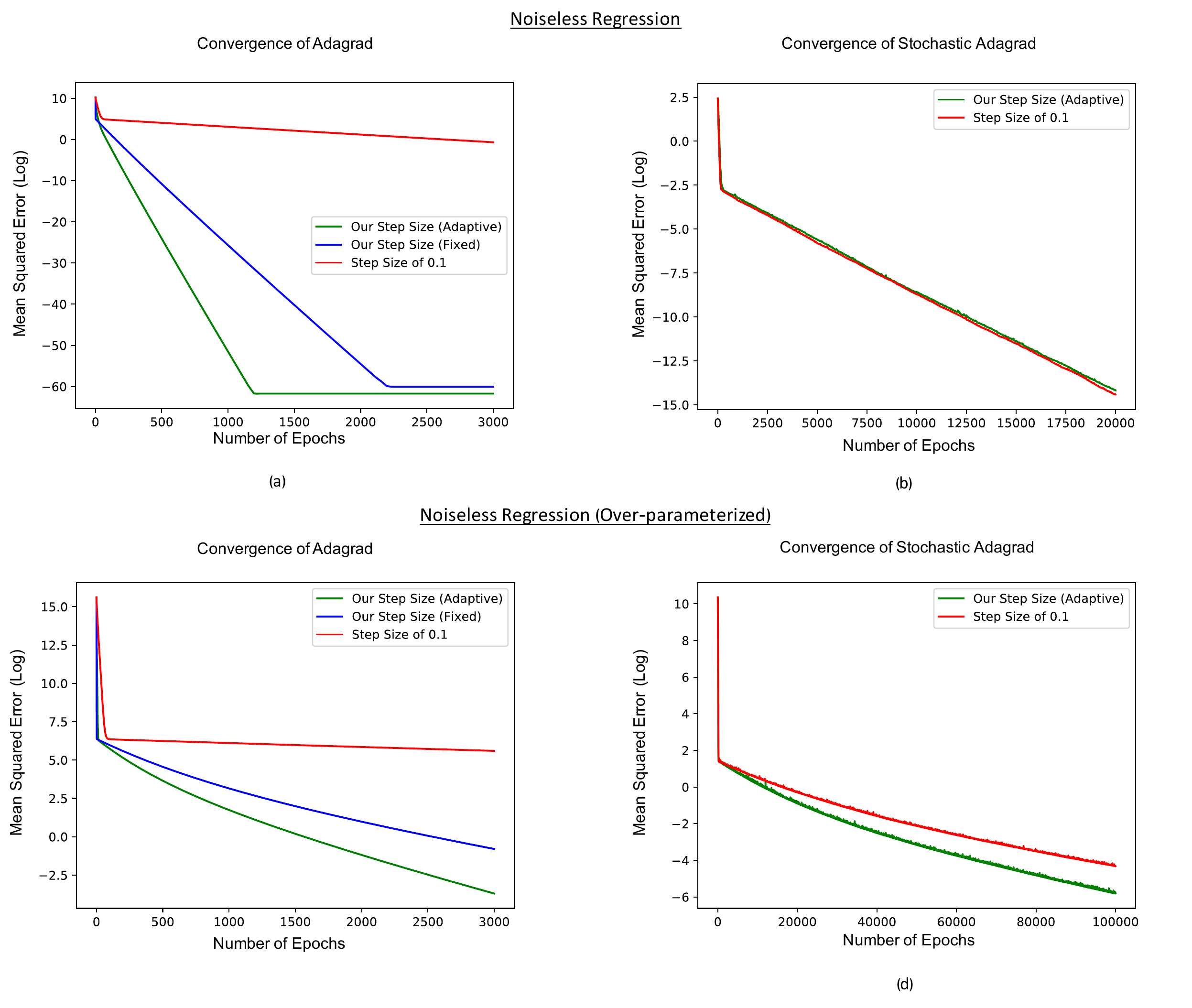}
    \caption{Using the rates provided by Corollary \ref{prop: Adagrad Linear Convergence} leads to linear convergence for (Stochastic) Adagrad in the noiseless linear regression setting also considered in \cite{AdagradPL}.  (a, b) Noiseless linear regression on 2000 examples in 20 dimensions. (c, d) Noiseless linear regression on 200 examples in 1000 dimensions.}
    \label{fig:Convergence in Noiseless Regression}
\end{figure*}

\section{Experimental Verification of our Theoretical Results}
\label{sec: Experiments}

We now present a simple set of experiments under which we can explicitly compute the learning rates in our theorems. We will show that in accordance with our theory, both fixed and adaptive versions of these learning rates yield linear convergence.  We focus on computing learning rates for Adagrad in the noiseless regression setting used in \cite{AdagradPL}.  Namely, we are given $(X, y) \in \mathbb{R}^{n \times d} \times \mathbb{R}^{n}$ such that there exists a $w^* \in \mathbb{R}^{d}$ such that $X w^* = y$.  If $n < d$, then the system is over-parameterized, and if $n \geq d$, the system is sufficiently parameterized and has a unique solution.  

In this setting, the squared loss (MSE) is $L$-smooth with $L=\lambda_{\max} (XX^T)$, and it is $\mu$-PL with $\mu = \lambda_{\min}(XX^T)$ where $\lambda_{max}$ and $\lambda_{min}$ refer to the largest and smallest non-zero eigenvalues, respectively\footnote{We take $\mu$ as the smallest non-zero eigenvalue since Adagrad updates keep parameters in the span of the data.}.  Moreover, for Adagrad, we can compute $\alpha_l^{(t)} = \min_{i \in [d]} (\sum_{k=0}^{t} \nabla f_i(w^{(k)}))^2$ and $\alpha_u^{(t)}=\max_{i \in [d]} (\sum_{k=0}^{t} \nabla f_i(w^{(k)}))^2$ at each timestep.  Hence for Adagrad in the noiseless linear regression setting, we can explicitly compute the learning rate provided in Theorem \ref{thm: Stochastic GGD linear convergence} for the stochastic setting and in Corollary \ref{prop: Adagrad Linear Convergence} for the full batch setting.  

Figure \ref{fig:Convergence in Noiseless Regression} demonstrates that in both, the over-parameterized and sufficiently parameterized settings, our provided learning rates yield linear convergence.  In the stochastic setting, the theory for fixed learning rates suggests a very small rate ($\approx10^{-9}$ for Figure \ref{fig:Convergence in Noiseless Regression}d) and hence we chose to only present the more reasonable adaptive step size as a comparison.  In the full batch setting, the learning rate obtained from our theorems out-performs using the standard fixed learning rate of $0.1$, while performance is comparable for the stochastic setting.  Interestingly, our theory suggests an adaptive learning rate that is increasing (in contrast to the usual decreasing learning rate schedules).  In particular, while the learning rate for Figure \ref{fig:Convergence in Noiseless Regression}a starts at $0.99$, it  increases to $1.56$ at the end of training.  

In Supplementary \ref{appendix H: Experiments on Over-parameterized Neural Networks}, we present experiments on over-parameterized neural networks.  While the PL condition holds in this setting \cite{PLAndNTKBelkin}, it can be difficult to compute the smoothness parameter $L$ (which was the motivation for developing Adagrad-Norm). Interestingly, our experiments demonstrate that our increasing adaptive learning rate from Theorem \ref{thm: Theorem 1 - Linear Convergence of GGD}, using an approximation for $L$, provides convergence for Adagrad in over-parameterized networks. The link to the code is provided in Supplementary \ref{appendix H: Experiments on Over-parameterized Neural Networks}.

\section{Conclusion}

In this work, we demonstrated that a PL-based analysis can be used to establish linear convergence for (stochastic) generalized mirror descent, which encompasses a range of optimization methods including gradient descent, mirror descent, and pre-conditioner methods such as Adagrad.  We first showed that the standard PL analysis for gradient descent can be extended to GMD for the non-stochastic setting under additional assumptions on the mirror function $\phi^{(t)}$.  Unfortunately, the standard PL analysis does not easily extend to the stochastic setting (SGMD). To establish linear convergence of SGMD, we developed a novel Taylor-series based analysis.  Lastly, we established a local convergence result for GMD by proving the existence of an interpolating solution in a ball around the initialization and then proving linear convergence of GMD to this solution.  Our local convergence result generalizes that of \cite{PLAndNTKBelkin} for gradient descent.  For the case of mirror descent, our local convergence result provides a formula for the radius of the ball around the initialization in Bregman divergence that contains an interpolating solution.  A bound on this radius was one of the main assumptions used to prove approximate implicit regularization results by \cite{MirrorDescentNonlinear}.

% presented stochastic generalized mirror descent, which generalizes both mirror descent and pre-conditioner methods.  By using the PL-condition and a Taylor-series based analysis, we provided sufficient conditions for linear convergence of SGMD in the non-convex setting.  As a corollary, we obtained sufficient conditions for linear convergence of both mirror descent and pre-conditioner methods such as Adagrad.  Lastly, we prove the existence of an interpolating solution and linear convergence of GMD to this solution for non-negative loss functions that are locally PL\textsuperscript{*}.  Importantly, our local convergence results allow us to obtain approximate implicit regularization results for GMD.  Namely, we prove that GMD linearly converges to an interpolating solution that is approximately the closest interpolating solution to the initialization in $\ell_2$ norm in the dual space.  For the full batch setting, this result provides a more natural characterization of implicit regularization in terms of $\ell_2$ norm in the dual space, as opposed to Bregman divergence.    

Looking ahead, we envision that the generality of our analysis (and the PL condition) could provide useful in the analysis of other commonly used adaptive methods such as Adam \cite{Adam}. Moreover, since the PL condition holds in varied settings including over-parameterized neural networks \cite{PLAndNTKBelkin}, it would be interesting to analyze whether the analysis here is compatible with that of \cite{PLAndNTKBelkin} and whether the learning rates obtained here also provide an improvement for convergence of stochastic gradient descent and mirror descent for over-parameterized neural networks.

\bibliographystyle{plain}
\bibliography{references}

\clearpage
\renewcommand{\thesubsection}{\Alph{subsection}}
\onecolumn

\newpage

\section*{Appendix} \pdfbookmark[1]{Appendix}{sec:appendix}
\renewcommand{\thesubsection}{\Alph{subsection}}

\subsection{Proof of Theorem \ref{thm: Theorem 2 - Taylor Series Analysis}}
\label{appendix B: Proof of Theorem 2}

We repeat the theorem below for convenience.   
\begin{theorem*}
Suppose $f: \mathbb{R}^{d} \rightarrow \mathbb{R}$ is $L$-smooth and $\mu$-PL and $\phi: \mathbb{R}^{d} \rightarrow \mathbb{R}^d$ is an analytic function with analytic inverse, $\phi^{-1}$.  If there exist $\alpha_l, \alpha_u > 0$ such that:
\begin{align*}
    &(a) ~~ \alpha_l \mathbf{I}  \preceq \mathbf{J}_{\phi} \preceq \alpha_u \mathbf{I},\\
    &(b) ~~ |\partial_{i_1, \ldots i_k} \phi_j^{-1} (x)| \leq \frac{k!}{2\alpha_u d} ~~ \forall x \in \mathbb{R}^d, i_1, \ldots i_k \in [d], j \in [d], k \geq 2,
\end{align*}
then GMD converges linearly for $\eta^{(t)} < \min\left( \frac{4\alpha_l^2}{5L \alpha_u}, \frac{1}{2\sqrt{d} \| \nabla f(w^{(t)}) \|} \right)$.  
\end{theorem*}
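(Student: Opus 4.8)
The plan is to follow the skeleton of the proof sketch, combining the descent lemma for $L$-smooth functions with a careful, term-by-term control of the Taylor expansion of $\phi^{-1}$. First I would invoke Lemma \ref{lemma: Properties of $L$-smooth functions}a to write
$$f(w^{(t+1)}) - f(w^{(t)}) \leq \langle \nabla f(w^{(t)}), w^{(t+1)} - w^{(t)} \rangle + \frac{L}{2}\|w^{(t+1)} - w^{(t)}\|^2,$$
and substitute the displacement using the update rule, $w^{(t+1)} - w^{(t)} = \phi^{-1}(\phi(w^{(t)}) - \eta^{(t)}\nabla f(w^{(t)})) - w^{(t)}$. Since $\phi^{-1}$ is analytic, I would expand it in its convergent Taylor series about $\phi(w^{(t)})$, isolating the first-order term $-\eta^{(t)}\mathbf{J}_{\phi^{-1}}(\phi(w^{(t)}))\nabla f(w^{(t)})$ from the remaining higher-order terms, which I abbreviate $\mathrm{HOT}$. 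By the inverse function theorem $\mathbf{J}_{\phi^{-1}} = \mathbf{J}_{\phi}^{-1}$, so condition (a) yields the spectral bounds $\frac{1}{\alpha_u}\mathbf{I} \preccurlyeq \mathbf{J}_{\phi^{-1}} \preccurlyeq \frac{1}{\alpha_l}\mathbf{I}$, which I will use to control the first-order contributions to both the inner-product term and the quadratic term above.

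The crux is bounding $\mathrm{HOT}$. I would estimate each coordinate $j$ using condition (b): the factor $\frac{k!}{2\alpha_u d}$ cancels the $\frac{1}{k!}$ from the Taylor coefficient, and summing $\prod_m |\nabla f(w^{(t)})_{i_m}|$ over the indices $i_1,\dots,i_k$ gives $\|\nabla f(w^{(t)})\|_1^k$. This collapses $\mathrm{HOT}$ into the geometric series $\frac{1}{2\alpha_u d}\sum_{k\geq 2}(\eta^{(t)}\|\nabla f(w^{(t)})\|_1)^k$, whose convergence with ratio below $\frac{1}{2}$ is exactly what the adaptive part of the learning rate $\eta^{(t)} < \frac{1}{2\sqrt{d}\,\|\nabla f(w^{(t)})\|}$ guarantees after applying $\|\cdot\|_1 \leq \sqrt{d}\,\|\cdot\|_2$. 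Summing the series and reassembling the $\ell_2$ norm over the $d$ coordinates yields $\|\mathrm{HOT}\| \leq \frac{\sqrt{d}\,(\eta^{(t)})^2\|\nabla f(w^{(t)})\|^2}{\alpha_u}$, and the adaptive bound again converts the leftover $\sqrt{d}\,\eta^{(t)}\|\nabla f(w^{(t)})\|$ factor into a constant at most $\frac{1}{2}$.

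With these pieces in hand, repeated Cauchy-Schwarz gives $\langle \nabla f(w^{(t)}), \mathrm{HOT}\rangle \leq \frac{\eta^{(t)}}{2\alpha_u}\|\nabla f(w^{(t)})\|^2$, so the inner-product term is at most $-\frac{\eta^{(t)}}{2\alpha_u}\|\nabla f(w^{(t)})\|^2$, while the first-order part of the quadratic term contributes $\big(\frac{L(\eta^{(t)})^2}{2\alpha_l^2} + \frac{L(\eta^{(t)})^2}{8\alpha_u^2}\big)\|\nabla f(w^{(t)})\|^2$, reproducing the displayed one-step bound. I would then force the coefficient of $\|\nabla f(w^{(t)})\|^2$ to be negative, using $\alpha_l \leq \alpha_u$ to bound $\frac{1}{2\alpha_l^2} + \frac{1}{8\alpha_u^2} \leq \frac{5}{8\alpha_l^2}$, which gives the threshold $\eta^{(t)} < \frac{4\alpha_l^2}{5L\alpha_u}$; applying the PL inequality to replace $\|\nabla f(w^{(t)})\|^2$ by $2\mu(f(w^{(t)}) - f(w^*))$ then produces the geometric contraction with factor $1 - \frac{\mu\eta^{(t)}}{\alpha_u} + \frac{\mu L(\eta^{(t)})^2}{\alpha_l^2} + \frac{\mu L(\eta^{(t)})^2}{4\alpha_u^2}$. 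Finally I must check this factor lies in $(0,1)$: the upper bound is equivalent to the negativity already established, and the lower bound follows since $\frac{\mu\eta^{(t)}}{\alpha_u} < \frac{4\mu\alpha_l^2}{5L\alpha_u^2} \leq \frac{4}{5} < 1$ by Lemma \ref{lemma: mu less than L} ($\mu \leq L$) together with $\alpha_l \leq \alpha_u$. I expect the main obstacle to be the rigorous control of the higher-order Taylor terms—justifying the termwise estimate of the infinite series, evaluating the combinatorial index sum, and tying the geometric-series ratio to the adaptive learning rate—whereas the remaining algebra placing the contraction factor in $(0,1)$ is routine.
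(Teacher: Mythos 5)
Your proposal is correct and takes essentially the same approach as the paper's proof in Appendix B: the descent lemma, the Taylor expansion of $\phi^{-1}$ split into the first-order term (controlled via condition (a) and $\mathbf{J}_{\phi^{-1}} = \mathbf{J}_{\phi}^{-1}$) and the higher-order terms (collapsed via condition (b) into a geometric series controlled by the adaptive step size and Cauchy--Schwarz), yielding the identical one-step bound and the same thresholds $\eta^{(t)} < \frac{4\alpha_l^2}{5L\alpha_u}$ and $\eta^{(t)} < \frac{1}{2\sqrt{d}\|\nabla f(w^{(t)})\|}$. The only deviation is minor: for the positivity of the contraction factor you bound the linear term directly using $\mu \leq L$ and $\alpha_l \leq \alpha_u$ (giving a factor above $\frac{1}{5}$), whereas the paper analyzes the discriminant of the quadratic in $\eta$ --- your check is a bit simpler and reaches the same conclusion.
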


\begin{proof}
Since $f$ is $L$-smooth, it holds by Lemma that \ref{lemma: Properties of $L$-smooth functions}:
\begin{align*}
    f(w^{(t+1)}) -  f(w^{(t)}) &\leq \langle \nabla f(w^{(t)}), w^{(t+1)} - w^{(t)} \rangle + \frac{L}{2} \| w^{(t+1)} - w^{(t)} \|^2.
\end{align*}
Next, we want to bound the two quantities on the right hand side by a multiple of $\|\nabla f(w^{(t)})\|^2$.  We do so by expanding $w^{(t+1)} - w^{(t)}$ using the Taylor series for $\phi^{-1}$ as follows.  Let 
\begin{align*}
    \mathbf{v}^{(t)} := \sum_{k=2}^{\infty} \frac{1}{k!}\begin{bmatrix} \sum_{i_1, i_2 \ldots i_k =  1}^{d}  (-\eta)^k \partial_{i_1, \ldots i_k} \phi_j^{-1} (\phi(w^{(t)})) (\nabla f(w^{(t)})_{i_1} \ldots \nabla f(w^{(t)})_{i_k} )\end{bmatrix} ~,
\end{align*}
where the quantity in brackets is a column vector where we only written out the $j^{th}$ coordinate for $j \in [d]$.  Then, we have: 
\begin{align*}
    w^{(t+1)} - w^{(t)} &= \phi^{-1}(\phi(w^{(t)}) - \eta \nabla f(w^{(t)})) - w^{(t)} \\
    &=  - \eta \mathbf{J}_{\phi^{-1}}(\phi(w^{(t)})) \nabla f(w^{(t)}) + \mathbf{v}^{(t)}.
    \end{align*}
  Now we bound the term $\langle \nabla f(w^{(t)}), w^{(t+1)} - w^{(t)} \rangle$: 
\begin{align*}
    \langle \nabla f(w^{(t)}), w^{(t+1)} - w^{(t)} \rangle &= - \eta \nabla f(w^{(t)})^T  \mathbf{J}_{\phi}^{-1}(w^{(t)}) \nabla f(w^{(t)}) + \nabla f(w^{(t)})^T \mathbf{v}^{(t)}.
\end{align*}
We have separated the first order term from the other orders because we will bound them separately using conditions (a) and (b) respectively.  Namely, we first have:
\begin{align*}
    - \eta \nabla f(w^{(t)})^T  \mathbf{J}_{\phi}^{-1}(w^{(t)}) \nabla f(w^{(t)}) \leq - \frac{\eta}{\alpha_u} \|\nabla f(w^{(t)})\|^2.
\end{align*}
Next, we use the Cauchy-Schwarz inequality on inner products to bound the inner product of $\nabla f(w^{(t)})$ and the higher order terms.  In the following, we use $\alpha$ to denote $\frac{1}{2\alpha_u d}$. 
\begin{align*}
    & \nabla f(w^{(t)})^T \mathbf{v}^{(t)}  \leq  \|\nabla f(w^{(t)})\| \| \mathbf{v}^{(t)}\| \\
    \leq & \|\nabla f(w^{(t)})\|   \sum_{k=2}^{\infty} \frac{\alpha k!}{k!} (\eta)^k \left\| \begin{bmatrix} \sum_{i_1, i_2 \ldots i_k =  1}^{d}   (|\nabla f(w^{(t)})_{i_1}| \ldots |\nabla f(w^{(t)})_{i_k}| )\end{bmatrix} \right\|\\
    = & \|\nabla f(w^{(t)})\|  \alpha \sum_{k=2}^{\infty} \sqrt{d} (\eta)^k (|\nabla f(w^{(t)})_1| + \ldots |\nabla f(w^{(t)}))_d| )^k\\
    = & \|\nabla f(w^{(t)})\|  \alpha \sum_{k=2}^{\infty} (\eta)^k \sqrt{d}  |\langle \begin{bmatrix}|\nabla f(w^{(t)})_1| \\ \vdots \\ |\nabla f(w^{(t)})_d| \end{bmatrix}, \mathbf{1} \rangle |^k \\
    \leq & \|\nabla f(w^{(t)})\| \alpha \sum_{k=2}^{\infty}  (\eta)^k \sqrt{d} \| \nabla f(w^{(t)}) \|^k (\sqrt{d})^k \\
    = & \alpha \sum_{k=2}^{\infty} (\sqrt{d})^{k+1}  (\eta)^k \| \nabla f(w^{(t)}) \|^{k+1} \\
    = & \alpha (\sqrt{d})^{3}  (\eta)^2 \| \nabla f(w^{(t)}) \|^{3} \sum_{k=0}^{\infty} (\sqrt{d})^{k}  (\eta)^k \| \nabla f(w^{(t)}) \|^{k} = \frac{\alpha (\sqrt{d})^{3}  (\eta)^2 \| \nabla f(w^{(t)}) \|^{3}}{1 - \sqrt{d}\eta \|\nabla f(w^{(t)})\| }.
\end{align*}
% \textcolor{red}{Provided that $c u > L$ for sufficiently small $c$, we can show the gradients decrease monotonically.} Now we assume inductively that $ \|\nabla f(w^{(t)}) \| \leq \|\nabla f(w^{(0)})\|$ (we return to prove that $ \|\nabla f(w^{(t+1)}) \| \leq \|\nabla f(w^{(0)})\|$ at the end).  Note that for the base case, we can always ensure $\eta < \frac{.5}{\sqrt{d} \|\nabla f(w^{(0)}) \|}$.
Hence we can select $\eta < \frac{1}{2\sqrt{d} \|\nabla f(w^{(t)})\|}$ such that:
\begin{align*}  
    \frac{\alpha (\sqrt{d})^{3}  (\eta)^2 \| \nabla f(w^{(t)}) \|^{3}}{1 - \sqrt{d}\eta \|\nabla f(w^{(t)})\| }  \leq \frac{\alpha (\sqrt{d})^{3}  (\eta)^2 \| \nabla f(w^{(t)}) \|^{3}}{\sqrt{d}\eta \|\nabla f(w^{(t)})\| } = d \alpha \eta \|\nabla f(w^{(t)})\|^2.
\end{align*}
Thus, we have established the following bound:
\begin{align*}
    \langle \nabla f(w^{(t)}), w^{(t+1)} - w^{(t)} \rangle  \leq \left( -\frac{\eta}{\alpha_u} + d \alpha \eta \right)\|\nabla f(w^{(t)})\|^2 = \left( -\frac{\eta}{2\alpha_u}\right)\|\nabla f(w^{(t)})\|^2.
\end{align*}

Proceeding analogously as above, we establish a bound on $\|w^{(t+1)} - w^{(t)}\|^2$:
\begin{align*}
    \|w^{(t+1)} - w^{(t)}\|^2 \leq \left( \frac{\eta^2}{\alpha_l^2} + \alpha^2 d^2 \eta^2 \right) \|\nabla f(w^{(t)})\|^2 = \left( \frac{\eta^2}{\alpha_l^2} + \frac{\eta^2}{4 \alpha_u^2} \right) \|\nabla f(w^{(t)})\|^2.
\end{align*}
Putting the bounds together we obtain:
\begin{align*}
    f(w^{(t+1)}) - f(w^{(t)})  \leq \left( -\frac{\eta}{2\alpha_u} + \frac{L\eta^2}{2\alpha_l^2} + \frac{L\eta^2}{8 \alpha_u^2} \right) \|\nabla f(w^{(t)})\|^2.
\end{align*}
We select our learning rate to make the coefficient of $ \|\nabla f(w^{(t)}\|^2$ negative, and thus by the PL-inequality \eqref{PL-Inequality}, we have:
\begin{align*}
    &f(w^{(t+1)}) - f(w^{(t)})  \leq \left(-\frac{\eta}{2\alpha_u} + \frac{L\eta^2}{2\alpha_l^2} + \frac{L\eta^2}{8 \alpha_u^2} \right) 2\mu (f(w^{(t)}) - f(w^*)) \\
    \implies & f(w^{(t+1)}) - f(w^*) \leq \left( 1  -\frac{\mu\eta}{\alpha_u} + \frac{\mu L\eta^2}{\alpha_l^2} + \frac{\mu L \eta^2}{4 \alpha_u^2} \right)  (f(w^{(t)}) - f(w^*)).
\end{align*}
Hence, $w^{(t)}$ converges linearly when:
\begin{align*}
    0 < 1 -\frac{\mu\eta}{\alpha_u} + \frac{\mu L\eta^2}{\alpha_l^2} + \frac{\mu L \eta^2}{4 \alpha_u^2}< 1.
\end{align*}

To show that the left hand side is true, we analyze when the discriminant is negative.  Namely, we have that the left side holds if:
\begin{align*}
    &\frac{\mu^2}{\alpha_u^2} - \frac{4 \mu L}{\alpha_l^2} - \frac{\mu L}{\alpha_u^2} < 0 \\
    \implies & \frac{\mu}{\alpha_u^2} <  \frac{4  L}{\alpha_l^2} + \frac{L}{\alpha_u^2} \\
    \implies & \mu <  \frac{4L\alpha_u^2}{\alpha_l^2} + L.
\end{align*}
Since $\mu < L$ by Lemma \ref{lemma: mu less than L}, this is always true. The right hand side holds when $\eta < \frac{4\alpha_l^2}{5L\alpha_u}$, which holds by the assumption of the theorem, thereby completing the proof. 
\end{proof}

Note that if $f$ is non-negative and $\mu$-PL\textsuperscript{*}, then we have:
\begin{align*}
    \eta^{(t)} \leq  \frac{1}{2 \sqrt{2Ld} \sqrt{f(w^{(0)})} } \leq \frac{1}{2 \sqrt{2Ld} \sqrt{f(w^{(t)})} } \leq  \frac{1}{2 \sqrt{d} \|\nabla f(w^{(t)})\|}
\end{align*}

Hence, we can use a fixed learning rate of $\eta = \min \left(\frac{4 \alpha_l^2}{5L \alpha_u},  \frac{1}{2 \sqrt{2Ld} \sqrt{f(w^{(0)})} } \right)$ in this setting.

\subsection{Conditions for Monotonically Decreasing Gradients}
\label{appendix C: Conditions for Monotonically Decreasing Gradients}
As discussed in the remarks after Theorem 2, we can provide a fixed learning rate for linear convergence provided that the gradients are monotonically decreasing.  As we show below, this requires special conditions on the PL constant, $\mu$,  and  the smoothness constant, $L$, for $f$.   

\begin{prop}
Suppose $f: \mathbb{R}^{d} \rightarrow \mathbb{R}$ is $L$-smooth and $\mu$-PL and $\phi: \mathbb{R}^{d} \rightarrow \mathbb{R}^d$ is an infinitely differentiable, analytic function with analytic inverse, $\phi^{-1}$.  If there exist $\alpha_l, \alpha_u > 0$ such that:
\begin{align*}
    &(a) ~~ \alpha_l \mathbf{I}  \preceq \mathbf{J}_{\phi} \preceq \alpha_u \mathbf{I},\\
    &(b) ~~ |\partial_{i_1, \ldots i_k} \phi_j^{-1} (x)| \leq \frac{k!}{2\alpha_u d} ~~ \forall x \in \mathbb{R}^d, i_1, \ldots i_k \in [d], j \in [d], k \geq 2,\\
    &(c) ~~ \frac{\mu}{L} > \frac{4\alpha_u^2 + \alpha_l^2}{4\alpha_u^2 + 2\alpha_l^2},
\end{align*}
then GMD converges linearly for any $\eta < \min\left( \frac{4\alpha_l^2}{5L \alpha_u}, \frac{1}{2\sqrt{d} \|\nabla f(w^{(0)}) \|} \right)$.  
\end{prop}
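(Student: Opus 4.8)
The plan is to piggyback entirely on the analysis already carried out for Theorem~\ref{thm: Theorem 2 - Taylor Series Analysis}. In that proof the adaptive factor $\frac{1}{2\sqrt{d}\,\|\nabla f(w^{(t)})\|}$ of the step size served one purpose only: it forced the geometric tail of the Taylor expansion of $\phi^{-1}$ to sum with ratio below $\tfrac{1}{2}$, so that the clean per-step bound $f(w^{(t+1)})-f(w^{(t)}) \le \big(-\frac{\eta}{2\alpha_u}+\frac{L\eta^2}{2\alpha_l^2}+\frac{L\eta^2}{8\alpha_u^2}\big)\|\nabla f(w^{(t)})\|^2$ holds at step $t$. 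That bound is valid at step $t$ precisely when $\eta < \frac{1}{2\sqrt{d}\,\|\nabla f(w^{(t)})\|}$. Since we now fix $\eta < \frac{1}{2\sqrt{d}\,\|\nabla f(w^{(0)})\|}$, this condition is automatically inherited at every later step as soon as the gradient norms never exceed their initial value. Thus the whole proposition reduces to the single claim that $\|\nabla f(w^{(t)})\|$ is non-increasing in $t$; once that is in hand, the contraction of Theorem~\ref{thm: Theorem 2 - Taylor Series Analysis} applies verbatim at the fixed $\eta$ and yields linear convergence.

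To establish the monotone decrease I would chain three ingredients. Writing $c := \frac{\eta}{2\alpha_u}-\frac{L\eta^2}{2\alpha_l^2}-\frac{L\eta^2}{8\alpha_u^2}$ (which is positive for $\eta<\frac{4\alpha_l^2}{5L\alpha_u}$, using $\alpha_l\le\alpha_u$), the proof of Theorem~\ref{thm: Theorem 2 - Taylor Series Analysis} gives the function-value contraction $f(w^{(t+1)})-f(w^*)\le(1-2c\mu)(f(w^{(t)})-f(w^*))$. Next, Lemma~\ref{lemma: Properties of $L$-smooth functions}b gives $\|\nabla f(w^{(t+1)})\|^2 \le 2L(f(w^{(t+1)})-f(w^*))$, while the PL condition gives $f(w^{(t)})-f(w^*)\le\frac{1}{2\mu}\|\nabla f(w^{(t)})\|^2$. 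Combining these three yields $\|\nabla f(w^{(t+1)})\|^2 \le \frac{L(1-2c\mu)}{\mu}\,\|\nabla f(w^{(t)})\|^2$, so the gradient norm decreases exactly when $\frac{L}{\mu}(1-2c\mu)\le 1$, i.e. when $c \ge \frac{L-\mu}{2L\mu}$.

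The role of hypothesis~(c) is precisely to make this last inequality attainable. Regarding $c(\eta)=\frac{\eta}{2\alpha_u}-L\eta^2(\frac{1}{2\alpha_l^2}+\frac{1}{8\alpha_u^2})$ as a downward parabola in $\eta$, its maximum is $c_{\max}=\frac{\alpha_l^2}{2L(4\alpha_u^2+\alpha_l^2)}$; substituting this into $c_{\max}\ge\frac{L-\mu}{2L\mu}$ and clearing denominators reproduces exactly $\frac{\mu}{L}\ge\frac{4\alpha_u^2+\alpha_l^2}{4\alpha_u^2+2\alpha_l^2}$, which is~(c). I would then close the argument by induction on $t$: the base case is trivial, and assuming $\|\nabla f(w^{(s)})\|\le\|\nabla f(w^{(0)})\|$ for all $s\le t$ validates the Taylor bound at step $t$, which triggers the contraction and, via the bound of the previous paragraph, forces $\|\nabla f(w^{(t+1)})\|\le\|\nabla f(w^{(t)})\|\le\|\nabla f(w^{(0)})\|$. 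The left inequality $1-2c\mu>0$ needed for a genuine contraction follows from $\mu\le L$ (Lemma~\ref{lemma: mu less than L}), exactly as in Theorem~\ref{thm: Theorem 2 - Taylor Series Analysis}.

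The main obstacle, and the point most deserving care, is that the monotone-decrease inequality $\frac{L}{\mu}(1-2c\mu)\le 1$ holds only for $\eta$ in a window centered at the maximizer of $c$, not for arbitrarily small $\eta$ (as $\eta\to 0$ one has $c\to 0$ and the factor tends to $L/\mu>1$). Consequently the faithful reading of the statement requires choosing the fixed rate so that $c(\eta)\ge\frac{L-\mu}{2L\mu}$, and hypothesis~(c) is exactly the condition guaranteeing this window is nonempty. Making the window explicit, and reconciling it with the ``for any $\eta$'' phrasing, is the delicate part of the write-up; everything else is a direct reuse of the estimates from Theorem~\ref{thm: Theorem 2 - Taylor Series Analysis}.
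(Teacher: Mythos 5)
Your proposal is correct and follows essentially the same route as the paper's own proof in Appendix C: you fix $\eta < \frac{1}{2\sqrt{d}\,\|\nabla f(w^{(0)})\|}$ and keep the Taylor-series bound valid by induction on the monotone decrease of $\|\nabla f(w^{(t)})\|$, derive the decrease condition $C = 1-2c\mu < \frac{\mu}{L}$ from Lemma \ref{lemma: Properties of $L$-smooth functions}b together with the PL inequality exactly as the paper does, and your maximization of the parabola $c(\eta)$, giving $c_{\max} = \frac{\alpha_l^2}{2L(4\alpha_u^2+\alpha_l^2)} \geq \frac{L-\mu}{2L\mu}$, is algebraically the same computation as the paper's positive-discriminant condition and recovers hypothesis (c) identically. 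Your closing caveat is also well taken rather than a gap: the paper's proof likewise only \emph{selects} $\eta$ satisfying $0 < C < \frac{\mu}{L}$, a window bounded away from $0$, so the ``for any $\eta < \min(\cdot,\cdot)$'' phrasing of the proposition (in which $\|f(w^{(0)})\|$ should moreover read $\|\nabla f(w^{(0)})\|$) is overstated in the paper itself, and your explicit handling of the window is the more careful write-up of the same argument.
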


\begin{proof}
Let $C = 1 -\frac{\mu\eta}{\alpha_u} + \frac{\mu L\eta^2}{\alpha_l^2} + \frac{\mu L \eta^2}{4 \alpha_u^2}$.  We follow exactly the proof of Theorem 2 except that at each timestep we need $C < \frac{\mu}{L}$ (which is less than $1$ by Lemma \ref{lemma: mu less than L}) in order for the gradients to converge monotonically since:
\begin{align*}
    \|\nabla f(w^{(t+1)}) \|^2 &\leq 2L (f(w^{(t+1)}) - f(w^*)) ~~ \text{See Lemma \ref{lemma: Properties of $L$-smooth functions}} \\
    & \leq 2L C (f(w^{(t)}) - f(w^*)) \\
    & \leq \frac{LC}{\mu} \|\nabla f(w^{(t)}) \|^2 ~~ \text{As $f$ is $\mu$-PL.} 
\end{align*}
Hence in order for $\|\nabla f(w^{(t+1)}) \|^2 < \|\nabla f(w^{(t)}) \|^2$, we need $C < \frac{\mu}{L}$.  Thus, we select our learning rate such that: 
\begin{align*}
    0 < 1 -\frac{\mu\eta}{\alpha_u} + \frac{\mu L\eta^2}{\alpha_l^2} + \frac{\mu L \eta^2}{4 \alpha_u^2}< \frac{\mu}{L}.
\end{align*}
Now, in order to have a solution to this system, we must ensure that the discriminant of the quadratic equation in $\eta$ when considering the right hand side inequality is larger than zero.  In particular we require:
\begin{align*}
    & \frac{\mu^2}{\alpha_u^2} - 4 \left(1 - \frac{\mu}{L} \right) \left(\frac{\mu L}{\alpha_l^2} + \frac{\mu L }{4 \alpha_u^2} \right) > 0 \\
    \implies & \frac{\mu}{L} > \frac{4\alpha_u^2 + \alpha_l^2}{4\alpha_u^2 + 2\alpha_l^2},
\end{align*}
which completes the proof.
\end{proof}

\subsection{Proof of Theorem 3}
\label{appendix D: Proof of Theorem 3}
We repeat the theorem below for convenience.  
\begin{theorem*}
Suppose $f(x) = \frac{1}{n}\sum_{i=1}^n f_i(x)$ where $f_i: \mathbb{R}^{d} \rightarrow \mathbb{R}$ are non-negative, $L_i$-smooth functions with $L = \sup_{i \in [n]} L_i$ and $f$ is $\mu$-PL\textsuperscript{*}.  Let $\phi: \mathbb{R}^{d} \rightarrow \mathbb{R}^d$ be an  analytic function with analytic inverse, $\phi^{-1}$.  SGMD is used to minimize $f$ according to the updates:
\begin{align*}
    \phi(w^{(t+1)}) = \phi(w^{(t)}) - \eta^{(t)} \nabla f_{i_t}(w^{(t)}),
\end{align*}
where $i_t \in [n]$ is chosen uniformly at random and $\eta^{(t)}$ is an adaptive step size. If there exist $\alpha_l, \alpha_u > 0$ such that:
\begin{align*}
    &(a) ~~ \alpha_l \mathbf{I}  \preceq \mathbf{J}_{\phi} \preceq \alpha_u \mathbf{I},\\
    &(b) ~~ |\partial_{i_1, \ldots i_k} \phi_j^{-1} (x)| \leq \frac{k! \mu}{2\alpha_u d L} ~~ \forall x \in \mathbb{R}^d, i_1, \ldots i_k \in [d], j \in [d], k \geq 2, 
\end{align*}
then SGMD with $\eta^{(t)} < \min\left( \frac{4\mu\alpha_l^2}{5L^2 \alpha_u}, \frac{1}{2\sqrt{d} \max_i \|\nabla f_{i}(w^{(t)}) \|} \right)$ converges linearly to a global minimum. 
\end{theorem*}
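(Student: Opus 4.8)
The plan is to mirror the deterministic Taylor-series argument of Theorem~\ref{thm: Theorem 2 - Taylor Series Analysis}, carrying the stochastic gradient $\nabla f_{i_t}(w^{(t)})$ through the expansion and only averaging over $i_t$ at the very end. First I would note that $f$ is $L$-smooth by Lemma~\ref{lemma: sum is smooth}, so Lemma~\ref{lemma: Properties of $L$-smooth functions}a gives
$$f(w^{(t+1)}) - f(w^{(t)}) \leq \langle \nabla f(w^{(t)}), w^{(t+1)} - w^{(t)} \rangle + \frac{L}{2}\|w^{(t+1)} - w^{(t)}\|^2.$$
I would then expand $w^{(t+1)} - w^{(t)} = \phi^{-1}(\phi(w^{(t)}) - \eta^{(t)} \nabla f_{i_t}(w^{(t)})) - w^{(t)}$ in the Taylor series for $\phi^{-1}$ exactly as in Theorem~\ref{thm: Theorem 2 - Taylor Series Analysis}, but now with $\nabla f_{i_t}$ as the displacement. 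The first-order term of the inner product becomes the bilinear form $-\eta^{(t)}\nabla f(w^{(t)})^T \mathbf{J}_{\phi}^{-1}(w^{(t)}) \nabla f_{i_t}(w^{(t)})$, while the higher-order terms are bounded verbatim using condition (b), Cauchy-Schwarz, and the geometric-series estimate, which converges precisely because $\eta^{(t)} < \frac{1}{2\sqrt{d}\|\nabla f_{i_t}(w^{(t)})\|}$ (the bound is now stated in terms of the \emph{stochastic} gradient). This yields a per-step inequality
$$f(w^{(t+1)}) - f(w^{(t)}) \leq -\frac{\eta^{(t)}}{2\alpha_u}\|\nabla f(w^{(t)})\|\,\|\nabla f_{i_t}(w^{(t)})\| + \left(\frac{L{\eta^{(t)}}^2}{2\alpha_l^2} + \frac{L{\eta^{(t)}}^2}{8\alpha_u^2}\right)\|\nabla f_{i_t}(w^{(t)})\|^2.$$

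Next I would take the conditional expectation $\mathbb{E}_{i_t}[\cdot]$ given the history, which is where the stochasticity is resolved. Using $\mathbb{E}_{i_t}[\nabla f_{i_t}(w^{(t)})] = \nabla f(w^{(t)})$, the first-order bilinear form averages to the positive-definite quadratic form $\nabla f^T \mathbf{J}_{\phi}^{-1}\nabla f \geq \frac{1}{\alpha_u}\|\nabla f\|^2$ (equivalently, Jensen's inequality $\mathbb{E}_{i_t}[\|\nabla f_{i_t}\|] \geq \|\nabla f\|$ combined with the negative sign keeps that term at most $-\frac{\eta^{(t)}}{2\alpha_u}\|\nabla f(w^{(t)})\|^2$). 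For the second-order terms I would apply Lemma~\ref{lemma: Properties of $L$-smooth functions}b to each $f_i$ and average to obtain $\mathbb{E}_{i_t}[\|\nabla f_{i_t}(w^{(t)})\|^2] \leq 2L(f(w^{(t)}) - f(w^*))$, and bound the surviving first-order term by the PL inequality $\|\nabla f\|^2 \geq 2\mu(f - f^*)$. Collecting terms produces
$$\mathbb{E}_{i_t}[f(w^{(t+1)})] - f(w^*) \leq \left(1 - \frac{\mu\eta^{(t)}}{\alpha_u} + \frac{L^2{\eta^{(t)}}^2}{\alpha_l^2} + \frac{L^2{\eta^{(t)}}^2}{4\alpha_u^2}\right)(f(w^{(t)}) - f(w^*)),$$
where the extra factor of $L$ (giving $L^2$) arises from bounding $\mathbb{E}[\|\nabla f_{i_t}\|^2]$ by smoothness rather than by the gradient of $f$ itself.

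Writing $C = -\frac{\mu\eta^{(t)}}{\alpha_u} + \frac{L^2{\eta^{(t)}}^2}{\alpha_l^2} + \frac{L^2{\eta^{(t)}}^2}{4\alpha_u^2}$, I would then show $0 < 1 + C < 1$. The left inequality follows as in Theorem~\ref{thm: Theorem 2 - Taylor Series Analysis} by checking that the discriminant of the quadratic in $\eta^{(t)}$ is negative, which reduces to $\mu^2 < \frac{4L^2\alpha_u^2}{\alpha_l^2} + L^2$ and hence holds since $\mu < L$ by Lemma~\ref{lemma: mu less than L}. The right inequality $C < 0$ reduces to $\eta^{(t)} < \frac{4\mu\alpha_l^2\alpha_u}{L^2(4\alpha_u^2 + \alpha_l^2)}$, and since condition (a) forces $\alpha_l \leq \alpha_u$ and hence $4\alpha_u^2 + \alpha_l^2 \leq 5\alpha_u^2$, the stated bound $\eta^{(t)} < \frac{4\mu\alpha_l^2}{5L^2\alpha_u}$ is sufficient. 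Finally I would iterate the one-step contraction via the tower property: taking expectation over $i_{t-1},\ldots,i_1$ and using that the past history is independent of $i_t$ gives $\mathbb{E}_{i_t,\ldots,i_1}[f(w^{(t+1)})] - f(w^*) \leq (1+C)^{t+1}(f(w^{(0)}) - f(w^*))$, establishing expected linear convergence.

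The main obstacle is the stochastic first-order term: before averaging, the bilinear form $\nabla f^T \mathbf{J}_{\phi}^{-1}\nabla f_{i_t}$ is not sign-definite, since the stochastic gradient may be misaligned with $\nabla f$, so it cannot be controlled pointwise as in the deterministic proof; the argument goes through only because unbiasedness of $\nabla f_{i_t}$ restores the quadratic form after taking the conditional expectation. A secondary subtlety worth handling carefully is that the adaptive step size $\eta^{(t)} < \frac{1}{2\sqrt{d}\|\nabla f_{i_t}(w^{(t)})\|}$ depends on the sampled index $i_t$ (so only the index-independent bound $\frac{4\mu\alpha_l^2}{5L^2\alpha_u}$ drives the contraction, while the index-dependent one merely guarantees convergence of the Taylor series for the realized $i_t$), and that the averaged bound $\mathbb{E}_{i_t}[\|\nabla f_{i_t}\|^2] \leq 2L(f - f^*)$ implicitly uses that the common global minimizer $w^*$ minimizes each $f_i$, i.e.\ the interpolation property of the over-parameterized setting this result targets.
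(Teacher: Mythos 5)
Your proposal follows essentially the same route as the paper's proof in Appendix D: $L$-smoothness of $f$ via Lemma~\ref{lemma: sum is smooth}, the Taylor expansion of $\phi^{-1}$ with the stochastic gradient as displacement, the per-step bound $-\frac{\eta^{(t)}}{2\alpha_u}\|\nabla f(w^{(t)})\|\|\nabla f_{i_t}(w^{(t)})\| + \bigl(\frac{L{\eta^{(t)}}^2}{2\alpha_l^2} + \frac{L{\eta^{(t)}}^2}{8\alpha_u^2}\bigr)\|\nabla f_{i_t}(w^{(t)})\|^2$, conditional expectation with Jensen and Lemma~\ref{lemma: Properties of $L$-smooth functions}b, the PL inequality, the discriminant argument for $0 < 1 + C$, and the tower-property induction, arriving at the same contraction factor and learning rate. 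If anything, you are more careful than the paper at two points it leaves implicit: you resolve the non-sign-definite first-order bilinear form by taking expectation before invoking the spectral bound (the paper states a pointwise inequality there), and you correctly flag that bounding $\mathbb{E}_{i_t}[\|\nabla f_{i_t}(w^{(t)})\|^2]$ by $2L(f(w^{(t)}) - f(w^*))$ tacitly uses that $w^*$ minimizes each $f_i$.
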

\begin{proof}
We follow the proof of Theorem \ref{thm: Theorem 2 - Taylor Series Analysis}.  Namely, Lemma \ref{lemma: sum is smooth} implies that $f$ is $L$-smooth and hence
\begin{align*}
    f(w^{(t+1)}) -  f(w^{(t)}) &\leq \langle \nabla f(w^{(t)}), w^{(t+1)} - w^{(t)} \rangle + \frac{L}{2} \| w^{(t+1)} - w^{(t)} \|^2.
\end{align*}
As before, we want to bound the two quantities on the right by $\|\nabla f(w^{(t)})\|^2$.  Following the bounds from the proof of Theorem \ref{thm: Theorem 2 - Taylor Series Analysis}, provided $\eta^{(t)} < \frac{1}{2\sqrt{d} \|\nabla f_{i_t}(w^{(t)}) \|}$ and letting
\begin{align*}
    &\mathbf{v^{(t)}} := \sum_{k=2}^{\infty} \frac{1}{k!} \begin{bmatrix} \sum_{i_1, i_2 \ldots i_k =  1}^{d}  (-\eta)^k \partial_{l_1, \ldots l_k} \phi_j^{-1} (\phi(w^{(t)})) (\nabla f_{i_t}(w^{(t)})_{l_1} \ldots \nabla f_{i_t}(w^{(t)})_{l_k} )\end{bmatrix}  ~, 
\end{align*}
we have that:
\begin{align*}
    \nabla f(w^{(t)})^T \mathbf{v^{(t)}} \leq \frac{\eta^{(t)} \mu}{2\alpha_u L  } \|\nabla f(w^{(t)})\| \|\nabla f_{i_t}(w^{(t)})\| 
\end{align*}

\noindent To remove the dependence of $\eta^{(t)}$ on $i_t$, we take $\eta^{(t)} < \frac{1}{2 \sqrt{d} \max_{i} \|\nabla f_i (w^{(t)})\|}$. Since $f$ is $\mu-$PL\textsuperscript{*} and $f_i$ is non-negative for all $i \in [n]$, $\|\nabla f_i(w^{(t)}\| \leq \sqrt{2 L f_i(w^{(t)})}$.  Thus, we can take 

\begin{align*}
    \eta^{(t)} < \frac{1}{2\sqrt{2 d L n} \sqrt{f(w^{(t)})} } \leq \frac{1}{2\sqrt{d}\max_i \|\nabla f_i(w^{(t)}) \|} 
\end{align*} 

% - \eta \nabla f(w^{(t)})^T  \mathbf{J}_{\phi}^{-1}(w^{(t)}) \nabla f_{i_t}(w^{(t)}) 

\noindent This implies the following bounds: 
\begin{align*}
    &\langle \nabla f(w^{(t)}), w^{(t+1)} - w^{(t)} \rangle  \leq -\eta^{(t)} \nabla {f(w^{(t)})} ^T \mathbf{J}_{\phi}^{-1}(w^{(t)}) \nabla f_{i_t}(w^{(t)}) \\
    &\hspace{44mm} +  \left( \frac{\eta^{(t)} \mu}{2\alpha_u L}\right)\|\nabla f(w^{(t)})\| \|\nabla f_{i_t}(w^{(t)})\| ~; \\
    &\|w^{(t+1)} - w^{(t)}\|^2 \leq \left( \frac{{\eta^{(t)}}^2}{\alpha_l^2} + \frac{{\eta^{(t)}}^2}{4 \alpha_u^2} \right) \|\nabla f_{i_t}(w^{(t)})\|^2.
\end{align*}
Putting the bounds together we obtain:
\begin{align*}
    f(w^{(t+1)}) - f(w^{(t)})  &\leq   -\eta^{(t)} \nabla {f(w^{(t)})} ^T \mathbf{J}_{\phi}^{-1}(w^{(t)}) \nabla f_{i_t}(w^{(t)}) \\
    &~~~~~~~~~~ + \left(\frac{\eta^{(t)} \mu}{2\alpha_u L}\right)\|\nabla f(w^{(t)})\| \|\nabla f_{i_t}(w^{(t)})\| \\
    &~~~~~~~~~~ + \left( \frac{{\eta^{(t)}}^2}{\alpha_l^2} + \frac{{\eta^{(t)}}^2}{4 \alpha_u^2} \right) \|\nabla f_{i_t}(w^{(t)})\|^2 \\
    &\leq -\eta^{(t)} \nabla {f(w^{(t)})} ^T \mathbf{J}_{\phi}^{-1}(w^{(t)}) \nabla f_{i_t}(w^{(t)}) \\
    &~~~~~~~~~~ + \left(\frac{\eta^{(t)} \mu }{2\alpha_u L}\right) 2L \sqrt{f(w^{(t)}) f_{i_t}(w^{(t)})} \\
    &~~~~~~~~~~ + \left( \frac{{\eta^{(t)}}^2}{\alpha_l^2} + \frac{{\eta^{(t)}}^2}{4 \alpha_u^2} \right) \|\nabla f_{i_t}(w^{(t)})\|^2
\end{align*}
Now taking expectation over $i_t$, we obtain
\begin{align*}
    \mathbb{E}[f(w^{(t+1)})] - f(w^{(t)}) &\leq  \left( -\frac{\eta^{(t)}}{\alpha_u} \right) \| \nabla {f(w^{(t)})}\|^2 + \left( \frac{\eta^{(t)} \mu}{\alpha_u}\right) \sqrt{f(w^{(t)})} \mathbb{E}\left[\sqrt{f_{i_t}(w^{(t)})}\right] \\
    &~~~~~~~~~~ + \left( \frac{L{\eta^{(t)}}^2}{2\alpha_l^2} + \frac{L{\eta^{(t)}}^2}{8 \alpha_u^2} \right) \mathbb{E}[\|\nabla f_{i_t}(w^{(t)})\|^2] \\
    & \leq  \left( -\frac{\eta^{(t)}}{\alpha_u} \right) \| \nabla {f(w^{(t)})}\|^2 + \left( \frac{\eta^{(t)} \mu}{\alpha_u}\right) f(w^{(t)})  \\
    &~~~~~~~~~~ + \left( \frac{L{\eta^{(t)}}^2}{2\alpha_l^2} + \frac{L{\eta^{(t)}}^2}{8 \alpha_u^2} \right) \mathbb{E}[\|\nabla f_{i_t}(w^{(t)})\|^2] \\
    & \leq  \left( -\frac{2 \mu \eta^{(t)}}{\alpha_u} \right)  {f(w^{(t)})} + \left( \frac{\eta^{(t)} \mu}{\alpha_u}\right) f(w^{(t)})  \\
    &~~~~~~~~~~ + \left( \frac{{L\eta^{(t)}}^2}{2\alpha_l^2} + \frac{{L\eta^{(t)}}^2}{8\alpha_u^2} \right) \mathbb{E}[2L(f_{i_t}(w^{(t)}))]\\
    &\leq \left( -\frac{\mu\eta^{(t)}}{\alpha_u} +  \frac{{L^2\eta^{(t)}}^2}{\alpha_l^2} + \frac{{L^2\eta^{(t)}}^2}{4\alpha_u^2} \right) (f(w^{(t)})).
\end{align*}
where the second inequality follows from Jensen's inequality and the third inequality follows from Lemma \ref{lemma: Properties of $L$-smooth functions}.
Hence, we have:
\begin{align*}
    \mathbb{E}[f(w^{(t+1)})]  \leq \left(1 -\frac{\mu\eta^{(t)}}{\alpha_u} +  \frac{{L^2\eta^{(t)}}^2}{\alpha_l^2} + \frac{{L^2\eta^{(t)}}^2}{4\alpha_u^2} \right) (f(w^{(t)}) ).
\end{align*}
Now let $C =  \left(-\frac{\mu\eta^{(t)}}{\alpha_u} +  \frac{{L^2\eta^{(t)}}^2}{\alpha_l^2} + \frac{{L^2\eta^{(t)}}^2}{4\alpha_u^2}\right)$.  Then taking expectation with respect to $i_t, i_{t-1}, \ldots i_1$, yields
\begin{align*}
     \mathbb{E}_{i_t, \ldots, i_1}[f(w^{(t+1)})]  &\leq (1+C) (\mathbb{E}_{i_t, \ldots, i_1}[f(w^{(t)})])  \\
     &= (1+C) (\mathbb{E}_{i_{t-1}, \ldots, i_1} [\mathbb{E}_{i_t | i_{t-1}, \ldots i_1}[f(w^{(t)})]])  \\
     &= (1+C) (\mathbb{E}_{i_{t-1}, \ldots, i_1} f(w^{(t)})] ).
\end{align*}
Hence, we can proceed inductively to conclude that
\begin{align*}
    \mathbb{E}_{i_t, \ldots, i_1}[f(w^{(t+1)})]  &\leq (1+C)^{t+1} (f(w^{(0)}))).
\end{align*}
Thus if $0 < 1+C < 1$, we establish linear convergence.  The left hand side is satisfied since $\mu < L$, and the right hand side is satisfied for $\eta^{(t)} < \frac{4\mu\alpha_l^2}{5L^2\alpha_u}$, which holds by the theorem's assumption, thereby completing the proof.
\end{proof}

\subsection{Proof of Theorem \ref{thm: Local convergence of GMD}}
\label{appendix E: Proof of Theorem Local convergence of GMD}

We restate the theorem below.
\begin{theorem*}
Suppose $\phi: \mathbb{R}^d \rightarrow \mathbb{R}^d$ is an invertible, $\alpha_u$-Lipschitz function and that $f: \mathbb{R}^{d} \rightarrow \mathbb{R}$ is non-negative, $L$-smooth, and $\mu$-PL\textsuperscript{*} on $\tilde{\mathcal{B}} = \{x  ~;~ \phi(x) \in \mathcal{B}(\phi(w^{(0)}), R)\}$ with $R = \frac{2 \sqrt{2L} \sqrt{f(w^{(0)})} \alpha_u^2 }{\alpha_l \mu}$.   If for all $x, y \in \mathbb{R}^d$ there exists $\alpha_l > 0$ such that
\begin{align*}
    \langle \phi(x) - \phi(y), x - y \rangle \geq  \alpha_l \| x -y \|^2 ,
\end{align*}
then, 
\begin{align*}
    & (1) ~~ \text{There exists a global minimum $w^{(\infty)} \in \tilde{\mathcal{B}}$.} \\
    & (2) ~~ \text{GMD converges linearly to $w^{(\infty)}$ for $\eta = \frac{\alpha_l}{L}$.} \\
    % & (3) ~~ \text{If $w^* = \argmin_{w \in \tilde{\mathcal{B}} ~ ; ~ f(w) = 0} \|\phi(w) - \phi(w^{(0)}) \|$ then, $\|\phi(w^*) - \phi(w^{(\infty)}) \| \leq 2R$.}
\end{align*}
\end{theorem*}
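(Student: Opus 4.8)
My plan is to run the whole argument as a single induction that simultaneously proves, for every iterate, that $w^{(t)} \in \tilde{\mathcal{B}}$ and that the loss contracts as $f(w^{(t)}) \le \rho^{t} f(w^{(0)})$, where $\rho = 1 - \frac{\mu \alpha_l^2}{L\alpha_u^2}$. Once this invariant is established, parts (1) and (2) follow from summability of the dual-space increments, and part (3) reduces to a one-line triangle inequality. The first ingredient is the one-step decrease: with $\eta = \frac{\alpha_l}{L}$, the computation in Theorem \ref{thm: Theorem 1 - Linear Convergence of GGD} (precisely its specialization in the Remark that follows) gives $f(w^{(t+1)}) - f(w^*) \le \rho\,(f(w^{(t)}) - f(w^*))$. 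The only non-global step in that computation is the PL step, which here is the \emph{local} PL\textsuperscript{*} inequality and hence needs only $w^{(t)} \in \tilde{\mathcal{B}}$; since $f^*=0$ in the PL\textsuperscript{*} setting, this reads $f(w^{(t+1)}) \le \rho\, f(w^{(t)})$. This is the inductive step for the loss, contingent on the current iterate lying in the ball.

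The heart of the argument is showing the iterates never leave $\tilde{\mathcal{B}}$. In the dual space the update is $\phi(w^{(t+1)}) - \phi(w^{(t)}) = -\eta\nabla f(w^{(t)})$, so $\|\phi(w^{(t+1)}) - \phi(w^{(0)})\| \le \sum_{k=0}^{t}\eta\,\|\nabla f(w^{(k)})\|$. Assuming inductively that $w^{(0)},\dots,w^{(t)} \in \tilde{\mathcal{B}}$, I would bound each gradient using $L$-smoothness and nonnegativity of $f$ (Lemma \ref{lemma: Properties of $L$-smooth functions}b with minimum value $0$, giving $\|\nabla f(x)\|^2 \le 2L f(x)$), obtaining $\|\nabla f(w^{(k)})\| \le \sqrt{2L f(w^{(k)})} \le \sqrt{2L f(w^{(0)})}\,(\sqrt{\rho})^{k}$. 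Summing the geometric series and using $1 - \sqrt{1-x} \ge x/2$ on $[0,1]$ to estimate $\frac{1}{1-\sqrt{\rho}} \le \frac{2L\alpha_u^2}{\mu\alpha_l^2}$ gives total movement at most $\frac{\alpha_l}{L}\sqrt{2L f(w^{(0)})}\cdot\frac{2L\alpha_u^2}{\mu\alpha_l^2} = R$. Hence $\|\phi(w^{(t+1)}) - \phi(w^{(0)})\| \le R$, so $w^{(t+1)} \in \tilde{\mathcal{B}}$, closing the induction; the radius $R$ in the hypothesis is chosen precisely so this estimate is tight.

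For part (1), the same geometric bound shows $\{\phi(w^{(t)})\}$ is Cauchy and converges to some $\zeta \in \mathcal{B}(\phi(w^{(0)}), R)$. Strong monotonicity gives $\|\phi(x) - \phi(y)\| \ge \alpha_l\|x-y\|$, so $\phi^{-1}$ is Lipschitz and continuous; thus $w^{(t)} = \phi^{-1}(\phi(w^{(t)})) \to \phi^{-1}(\zeta) =: w^{(\infty)} \in \tilde{\mathcal{B}}$, and continuity of $f$ with $f(w^{(t)}) \to 0$ forces $f(w^{(\infty)}) = 0$. This point is the global minimum, and $f(w^{(t)}) \le \rho^{t} f(w^{(0)})$ is exactly the linear convergence asserted in (2). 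For part (3), since $w^{(\infty)}$ is a feasible interpolating point in $\tilde{\mathcal{B}}$, the definition of $w^*$ as the dual-closest such point gives $\|\phi(w^*) - \phi(w^{(0)})\| \le \|\phi(w^{(\infty)}) - \phi(w^{(0)})\| \le R$; combined with $\|\phi(w^{(\infty)}) - \phi(w^{(0)})\| \le R$ via the triangle inequality this yields $\|\phi(w^*) - \phi(w^{(\infty)})\| \le 2R$.

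The main obstacle is the self-referential nature of the induction: the per-step contraction relies on the local PL\textsuperscript{*} inequality, which is only valid inside $\tilde{\mathcal{B}}$, yet remaining inside $\tilde{\mathcal{B}}$ is precisely what the contraction is used to establish. Resolving this requires carrying both invariants jointly and verifying that the summed dual increments land exactly at the prescribed radius; the delicate quantitative point is the estimate $\frac{1}{1-\sqrt{\rho}} \le \frac{2L\alpha_u^2}{\mu\alpha_l^2}$, which is what collapses the total-movement bound to $R$ rather than a larger constant.
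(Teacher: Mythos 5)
Your proposal is correct and follows essentially the same route as the paper's proof in Appendix \ref{appendix E: Proof of Theorem Local convergence of GMD}: a joint induction maintaining $w^{(t)} \in \tilde{\mathcal{B}}$ together with the geometric decay $f(w^{(t)}) \leq (1-\kappa^{-1})^t f(w^{(0)})$ (with $\kappa = \frac{L\alpha_u^2}{\mu\alpha_l^2}$, your $\rho = 1-\kappa^{-1}$), summing the dual-space increments as a geometric series and collapsing the sum to $R$ via $\frac{1}{1-\sqrt{1-\kappa^{-1}}} \leq 2\kappa$. The only (harmless) differences are that you bound $\|\nabla f(w^{(k)})\|$ by $\sqrt{2Lf(w^{(k)})}$ directly from Lemma \ref{lemma: Properties of $L$-smooth functions}b where the paper derives the slightly looser bound $\sqrt{2L}\,\frac{\alpha_u}{\alpha_l}\sqrt{f(w^{(k)})}$ from the one-step decrease identity, and that you spell out the Cauchy-sequence construction of $w^{(\infty)}$ (using $\|\phi(x)-\phi(y)\| \geq \alpha_l \|x-y\|$ from the monotonicity condition) and the triangle-inequality step for part (3), both of which the paper's proof leaves implicit.
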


\begin{proof}
The proof follows from the proofs of Lemma \ref{lemma: Linear convergence under PL*}, Theorem \ref{thm: Theorem 1 - Linear Convergence of GGD}, and Theorem 4.2 from \cite{PLAndNTKBelkin}.  Namely, we will proceed by strong induction.  Let $\kappa = \frac{L{\alpha_u}^2}{\mu {\alpha_l}^2}$.  At timestep $0$, we trivially have that $w^{(0)} \in \tilde{\mathcal{B}}$ and $f(w^{(0)}) \leq f(w^{(0)})$.  At timestep $t$, we assume that $w^{(0)}, w^{(1)}, \ldots w^{(t)} \in \tilde{\mathcal{B}}$ and that $f(w^{(i)}) \leq (1 - {\kappa}^{-1}) f(w^{(i-1)}) $ for $i \in [t]$.  Then at timestep $t+1$, from the proofs of Lemma \ref{lemma: Linear convergence under PL*} and Theorem \ref{thm: Theorem 1 - Linear Convergence of GGD}, we have:
$$f(w^{(t+1)}) \leq (1 - {\kappa}^{-1}) f(w^{(t)})$$  
Next, we need to show that $w^{(t+1)} \in \tilde{\mathcal{B}}$.  We have that:
\begin{align}
    \|\phi(w^{(t+1)}) - \phi(w^{(0)}) \| &= \left\| \sum_{i=0}^{t} -\eta \nabla f(w^{(i)}) \right \| \nonumber \\
    &\leq \eta \sum_{i=0}^{t} \| \nabla f(w^{(i)}) \| ~~ \text{By the Triangle Inequality} \nonumber\\
    &\leq \eta \sqrt{2 \frac{L \alpha_u^2}{\alpha_l^2}}\sum_{i=0}^{t}  \sqrt{f(w^{(t)}) - f(w^{(t+1)})} ~~ \label{identity 1}\\
    &\leq \eta \sqrt{2 \frac{L \alpha_u^2}{\alpha_l^2}}\sum_{i=0}^{t}  \sqrt{f(w^{(t)})} \nonumber \\
    &\leq \eta \sqrt{2L} \frac{\alpha_u}{\alpha_l} \sum_{i=0}^{t}  \sqrt{(1 - \kappa^{-1})^{i}} \sqrt{f(w^{(0)})} \nonumber \\
    &= \eta \sqrt{2 L f(w^{(0)})} \frac{\alpha_u}{\alpha_l} \sum_{i=0}^{t} (1 - \kappa^{-1})^{\frac{i}{2}} \nonumber \\
    &\leq \eta \sqrt{2 L f(w^{(0)})} \frac{\alpha_u}{\alpha_l} \frac{1}{1 - \sqrt{1 - \kappa^{-1}}} \nonumber\\
    &\leq \eta \sqrt{2 L f(w^{(0)})} \frac{\alpha_u}{\alpha_l} \frac{2}{\kappa^{-1}} \nonumber \\
    &= \frac{\alpha_l}{L} \sqrt{2 L f(w^{(0)})} \frac{\alpha_u}{\alpha_l} 2 \frac{\alpha_u L}{\alpha_l \mu} \nonumber\\
    &= \frac{2 \sqrt{2L} \sqrt{f(w^{(0)})} \alpha_u^2 }{\alpha_l \mu} = R \nonumber
\end{align} 
The identity in \eqref{identity 1} follows from the proof of $f(w^{(t+1)}) \leq (1 - \kappa^{-1}) f(w^{(t)})$. Namely,  
\begin{align*}
    & f(w^{(t+1)}) -  f(w^{(t)}) \leq -\frac{L}{2 \alpha_u^2}  \|-\eta \nabla f(w^{(t)}) \|^2 \\
    \implies & \| \nabla f(w^{(t)}) \| \leq \sqrt{\frac{2 \alpha_u^2}{L}} \sqrt{f(w^{(t)}) - f(w^{(t+1)})} \\
    \implies & \| \nabla f(w^{(t)}) \| \leq \eta \sqrt{\frac{2 L \alpha_u^2}{\alpha_l^2}} \sqrt{f(w^{(t)}) - f(w^{(t+1)})}
\end{align*}
Hence we conclude that $w^{(t+1)} \in \tilde{\mathcal{B}}$ and so induction is complete.  
\end{proof}

In the case that $\phi^{(t)}$ is time-dependent, we establish a similar convergence result by assuming that: $$\left\|\sum\limits_{i=1}^{\infty} \phi^{(i)}(w^{(i)}) - \phi^{(i-1)}(w^{(i)})\right\| = \delta < \infty$$.  Additionally if $\alpha_u^{(t)}$ has a uniform upper bound and  $\alpha_l^{(t)}$ has a uniform nonzero lower bound, then: 
\begin{align*}
    \|\phi^{(t)}(w^{(t+1)}) - \phi^{(0)}(w^{(0)})\| &= \|\phi^{(t)}(w^{(t+1)}) - \phi^{(t)}(w^{(t)}) \\
    &~~~~+ \phi^{(t)}(w^{(t)}) - \phi^{(t-1)}(w^{(t)}) \\
    & ~~~~ + \phi^{(t-1)}(w^{(t)}) - \phi^{(t-1)}(w^{(t-1)}) \\
    &~~~~+ ~\ldots ~+ \phi^{(0)}(w^{(1)}) - \phi^{(0)}(w^{(0)}) \| \\
    &\leq \left\| \sum_{i=0}^{t} \phi^{(i)}(w^{(i+1)}) - \phi^{(i)}(w^{(i)})\right\| \\
    &~~~~+  \left\| \sum_{i=1}^{t} \phi^{(i)}(w^{(i)}) - \phi^{(i-1)}(w^{(i)})\right\| \\
    &\leq R + \delta
\end{align*}
Hence we would conclude that $\phi^{(t)}(w^{(t+1)}) \in \mathcal{B}(\phi^{(0)}(w^{(0)}), R + \delta)$.

\subsection{Proof of Corollary \ref{prop: Adagrad Linear Convergence} and Corollary 2}
\label{appendix G: Proof of Corollary prop: Adagrad Linear Convergence}
We repeat Corollary \ref{prop: Adagrad Linear Convergence} below.  

\begin{corollary*}
Let $f: \mathbb{R}^{d} \rightarrow \mathbb{R}$ be an $L$-smooth function that is $\mu$-PL.  Let ${\alpha_l^{(t)}}^2 = \min_{i \in [d]} \mathcal{G}_{i,i}^{(t)}$ and ${\alpha_u^{(t)}}^2 = \max_{i \in [d]} \mathcal{G}_{i,i}^{(t)}$.  If $\lim\limits_{t \to \infty} \frac{\alpha_l^{(t)}}{\alpha_u^{(t)}} \neq 0$, then Adagrad converges linearly for adaptive step size $\eta^{(t)} = \frac{\alpha_l^{(t)}}{L }$.
\end{corollary*}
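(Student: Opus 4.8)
The plan is to realize Adagrad as an instance of GMD with a linear, invertible mirror and then invoke the per-step contraction from the proof of Theorem~\ref{thm: Theorem 1 - Linear Convergence of GGD}, using the ratio hypothesis $\lim_{t\to\infty}\alpha_l^{(t)}/\alpha_u^{(t)}\neq 0$ to make the contraction factor uniformly bounded away from $1$. First I would identify $\phi^{(t)}(x)={\mathcal{G}^{(t)}}^{1/2}x$. Assuming throughout (as is standard for Adagrad) that $\mathcal{G}^{(t)}$ is a diagonal matrix with strictly positive entries, its square root ${\mathcal{G}^{(t)}}^{1/2}$ is a symmetric positive definite diagonal matrix, so $\phi^{(t)}$ is an invertible linear map. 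Its operator norm equals its largest eigenvalue $\sqrt{\max_{i\in[d]}\mathcal{G}_{i,i}^{(t)}}=\alpha_u^{(t)}$, giving $\|\phi^{(t)}(x)-\phi^{(t)}(y)\|=\|{\mathcal{G}^{(t)}}^{1/2}(x-y)\|\le\alpha_u^{(t)}\|x-y\|$, so $\phi^{(t)}$ is $\alpha_u^{(t)}$-Lipschitz. Since its smallest eigenvalue is $\sqrt{\min_{i\in[d]}\mathcal{G}_{i,i}^{(t)}}=\alpha_l^{(t)}$,
\[
\langle \phi^{(t)}(x)-\phi^{(t)}(y),\,x-y\rangle=(x-y)^{T}{\mathcal{G}^{(t)}}^{1/2}(x-y)\ge \alpha_l^{(t)}\|x-y\|^{2},
\]
so the monotonicity hypothesis of Theorem~\ref{thm: Theorem 1 - Linear Convergence of GGD} holds with exactly the $\alpha_l^{(t)},\alpha_u^{(t)}$ defined in the corollary.

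Next, the chosen step size $\eta^{(t)}=\alpha_l^{(t)}/L$ satisfies $\eta^{(t)}<2\alpha_l^{(t)}/L$, so the per-step estimate derived in the proof of Theorem~\ref{thm: Theorem 1 - Linear Convergence of GGD} (and recorded in the remark following it) applies at every iteration and gives
\[
f(w^{(t+1)})-f(w^{*})\le\Big(1-{(\kappa^{(t)})}^{-1}\Big)\big(f(w^{(t)})-f(w^{*})\big),\qquad \kappa^{(t)}=\frac{L{\alpha_u^{(t)}}^{2}}{\mu{\alpha_l^{(t)}}^{2}}.
\]
I would emphasize that only these step-$t$ quantities enter the bound, so no separate assumption on the limits of $\alpha_u^{(t)}$ or $\alpha_l^{(t)}$ is needed; what remains is to show that the product of the factors $1-(\kappa^{(t)})^{-1}$ tends to $0$.

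Finally I would translate the ratio hypothesis into uniform contraction. Because $\kappa^{(t)}=\tfrac{L}{\mu}\big(\alpha_u^{(t)}/\alpha_l^{(t)}\big)^{2}$ and $\lim_{t\to\infty}\alpha_l^{(t)}/\alpha_u^{(t)}=:c>0$, there is a $T$ with $\alpha_l^{(t)}/\alpha_u^{(t)}\ge c/2$ for all $t\ge T$, whence $\kappa^{(t)}\le \tfrac{4L}{\mu c^{2}}$ and $1-(\kappa^{(t)})^{-1}\le 1-\tfrac{\mu c^{2}}{4L}<1$. Lemma~\ref{lemma: mu less than L} together with $\alpha_l^{(t)}\le\alpha_u^{(t)}$ gives $\kappa^{(t)}\ge 1$, so every factor lies in $[0,1)$, the finitely many factors for $t<T$ each being strictly below $1$. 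Multiplying the per-step bounds then yields, for $t\ge T$,
\[
f(w^{(t)})-f(w^{*})\le \Big(1-\tfrac{\mu c^{2}}{4L}\Big)^{t-T}\Big(\textstyle\prod_{s<T}\big(1-(\kappa^{(s)})^{-1}\big)\Big)\big(f(w^{(0)})-f(w^{*})\big),
\]
which decays geometrically, establishing linear convergence.

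The main obstacle, I expect, is precisely this last step: the contraction factor is time-dependent through $\kappa^{(t)}$, so linear convergence is not immediate from a single-step estimate, and one must rule out the degenerate scenario $\kappa^{(t)}\to\infty$ (equivalently $\alpha_l^{(t)}/\alpha_u^{(t)}\to 0$), in which the factors approach $1$ and the product need not vanish. The ratio hypothesis is exactly what excludes this; by comparison, verifying the Lipschitz and monotonicity constants through the spectrum of the diagonal matrix ${\mathcal{G}^{(t)}}^{1/2}$ is routine.
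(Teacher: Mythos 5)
Your proposal is correct and follows essentially the same route as the paper's proof: both invoke the per-step contraction $f(w^{(t+1)})-f(w^*)\le\bigl(1-\tfrac{\mu{\alpha_l^{(t)}}^2}{L{\alpha_u^{(t)}}^2}\bigr)(f(w^{(t)})-f(w^*))$ from the proof of Theorem~\ref{thm: Theorem 1 - Linear Convergence of GGD} with $\eta^{(t)}=\alpha_l^{(t)}/L$, and both use the limit hypothesis on $\alpha_l^{(t)}/\alpha_u^{(t)}$ to bound the time-dependent contraction factors uniformly away from $1$ so the infinite product vanishes (your $c/2$ tail bound is just a quantitative variant of the paper's $\epsilon$-of-the-limit argument). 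Your explicit spectral verification that $\phi^{(t)}(x)={\mathcal{G}^{(t)}}^{1/2}x$ satisfies the Lipschitz and monotonicity hypotheses with exactly these $\alpha_l^{(t)},\alpha_u^{(t)}$ is a welcome detail the paper leaves implicit, but it does not change the argument.
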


\begin{proof}
By definition of $\mathcal{G}^{(t)}$, we have that:
\begin{align*}
    &(1) ~~ {\alpha_l^{(t)}}^2 = \min_{i \in [d]} \mathcal{G}_{i,i}^{(t)}\\
    &(2) ~~ {\alpha_u^{(t)}}^2 = \max_{i \in [d]} \mathcal{G}_{i,i}^{(t)}
\end{align*}

From the proof of Theorem \ref{thm: Theorem 1 - Linear Convergence of GGD}, using learning rate $\eta^{(t)} = \frac{\alpha_l^{(t)}}{L }$ at timestep $t$ gives:
\begin{align*}
    f(w^{(t+1)}) - f(w^*) \leq \left(1 - \frac{\mu{\alpha_l^{(t)}}^2}{L{\alpha_u^{(t)}}^2} \right) (f(w^{(t)}) - f(w^*))
\end{align*}

Let $\kappa^{(t)} = \frac{\mu{\alpha_l^{(t)}}^2}{L{\alpha_u^{(t)}}^2}$.  Although we have that $(1 - \kappa^{(t)}) < 1$ for all $t$, we need to ensure that $\prod\limits_{i=0}^{\infty}(1 - \kappa^{(i)}) = 0$ (otherwise we would not get convergence to a global minimum).  Using the assumption that $\lim\limits_{t \to \infty} \frac{\alpha_l^{(t)}}{\alpha_u^{(t)}} \neq 0$, let $\lim\limits_{t \to \infty}(1 - \kappa^{(t)}) = 1 - c < 1$.  Then using the definition of the limit, for $0 < \epsilon < c$, there exists $N$ such that for $t > N$, $\left| \kappa^{(t)} -  c\right| < \epsilon$.  Hence, letting $c^* = \min \left(c - \epsilon, \min\limits_{t \in \{0, 1, \ldots N\}}  \kappa^{(t)} \right)$, implies that $(1 - \kappa^{(t)}) < 1 - c^*$ for all timesteps $t$. Thus, we have that:
\begin{align*}
    \prod\limits_{i=0}^{\infty}(1 - \kappa^{(i)}) < \prod\limits_{i=0}^{\infty} (1 - c^*) = 0
\end{align*}
Thus, Adagrad converges linearly to a global minimum.   
\end{proof}

\noindent   We present Corollary \ref{corollary: Adagrad Alternate Corollary} below.

\begin{corollary*}
% \label{corollary: Adagrad Alternate Corollary}
Let $f: \mathbb{R}^{d} \rightarrow \mathbb{R}$ be an $L$-smooth, non-negative function that is $\mu$-PL\textsuperscript{*}.  Let ${\alpha_l^{(t)}}^2 = \min_{i \in [d]} \mathcal{G}_{i,i}^{(t)}$.  Then Adagrad converges linearly for adaptive step size $\eta^{(t)} = \frac{\alpha_l^{(t)}}{L }$ or fixed step size $\eta = \frac{\alpha_l^{(0)}}{L}$ if:
\begin{align*}
    f(w^{(0)}) < \frac{\alpha_l^2 \mu}{2L^2}
\end{align*}
\end{corollary*}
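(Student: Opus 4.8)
\noindent\textit{Proof sketch (proposed).}
The plan is to reduce the statement to a uniform bound on the Adagrad scaling $\alpha_u^{(t)}$ and then reuse the machinery of Theorem~\ref{thm: Theorem 1 - Linear Convergence of GGD} and Corollary~\ref{prop: Adagrad Linear Convergence}. Since $\phi^{(t)}(x)={\mathcal{G}^{(t)}}^{1/2}x$ is a diagonal positive-definite linear map, it is $\alpha_u^{(t)}$-Lipschitz and satisfies $\langle \phi^{(t)}(x)-\phi^{(t)}(y),x-y\rangle\ge \alpha_l^{(t)}\|x-y\|^2$ with $\alpha_l^{(t)}=\sqrt{\min_i \mathcal{G}_{i,i}^{(t)}}$ and $\alpha_u^{(t)}=\sqrt{\max_i \mathcal{G}_{i,i}^{(t)}}$, so the hypotheses of Theorem~\ref{thm: Theorem 1 - Linear Convergence of GGD} hold at every step. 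Because $\mathcal{G}_{i,i}^{(t)}=\sum_{k=0}^t\nabla f_i(w^{(k)})^2$ is nondecreasing in $t$, both $\alpha_l^{(t)}$ and $\alpha_u^{(t)}$ are nondecreasing; in particular $\alpha_l^{(t)}\ge \alpha_l^{(0)}>0$, so both the adaptive step $\eta^{(t)}=\alpha_l^{(t)}/L$ and the fixed step $\eta=\alpha_l^{(0)}/L$ satisfy $\eta\le \alpha_l^{(t)}/L<2\alpha_l^{(t)}/L$ for every $t$. Theorem~\ref{thm: Theorem 1 - Linear Convergence of GGD} then yields the per-step contraction $f(w^{(t+1)})-f(w^*)\le (1-\gamma^{(t)})(f(w^{(t)})-f(w^*))$ with $\gamma^{(t)}=\frac{\mu {\alpha_l^{(0)}}^2}{L{\alpha_u^{(t)}}^2}$ (one may use $\alpha_l^{(t)}$ in place of $\alpha_l^{(0)}$ for the adaptive rate). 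By the product argument in the proof of Corollary~\ref{prop: Adagrad Linear Convergence}, linear convergence to a global minimum follows as soon as $\prod_t(1-\gamma^{(t)})=0$, i.e. as soon as $\gamma^{(t)}$ is bounded away from $0$, which given $\alpha_l^{(t)}\ge\alpha_l^{(0)}$ is equivalent to $\sup_t \alpha_u^{(t)}<\infty$.

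The crux is therefore to bound $\alpha_u^{(t)}$, and this is where the hypothesis enters. Bounding the maximum by the trace and applying Lemma~\ref{lemma: Properties of $L$-smooth functions}b,
\[
{\alpha_u^{(t)}}^2=\max_i \mathcal{G}_{i,i}^{(t)}\le \sum_{i=1}^d \mathcal{G}_{i,i}^{(t)}=\sum_{k=0}^t\|\nabla f(w^{(k)})\|^2\le 2L\sum_{k=0}^t\big(f(w^{(k)})-f(w^*)\big),
\]
so controlling $\alpha_u^{(t)}$ reduces to controlling the summed suboptimality. I would close this by strong induction: assuming $\alpha_u^{(k)}\le \alpha_u^*$ for $k\le t$, each contraction factor is at most $\rho=1-\frac{\mu{\alpha_l^{(0)}}^2}{L{\alpha_u^*}^2}\in(0,1)$, whence $\sum_{k=0}^{t+1}(f(w^{(k)})-f(w^*))\le (f(w^{(0)})-f(w^*))/(1-\rho)$ and, substituting into the display, ${\alpha_u^{(t+1)}}^2\le \frac{2L^2(f(w^{(0)})-f(w^*))}{\mu{\alpha_l^{(0)}}^2}\,{\alpha_u^*}^2$. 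The hypothesis $\frac{{\alpha_l^{(0)}}^2}{2L(f(w^{(0)})-f(w^*))}>\frac L\mu$ is exactly the statement that the prefactor $\frac{2L^2(f(w^{(0)})-f(w^*))}{\mu{\alpha_l^{(0)}}^2}$ is strictly less than $1$; taking $\alpha_u^*$ above $\alpha_u^{(0)}$ this closes the induction and gives $\sup_t\alpha_u^{(t)}<\infty$. The fixed and adaptive cases run in parallel, the adaptive one being never slower since $\alpha_l^{(t)}\ge\alpha_l^{(0)}$, so both conclusions follow at once.

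The main obstacle is the circular dependence between the convergence rate and the geometry induced by $\phi^{(t)}$: the factor $\alpha_u^{(t)}$ accumulates the squared gradients along the (a priori unknown) trajectory, while the per-step rate $1-\gamma^{(t)}$ itself depends on $\alpha_u^{(t)}$, so a large $\alpha_u^{(t)}$ slows convergence, inflating the summed suboptimality and hence $\alpha_u^{(t+1)}$. The bootstrapping is precisely what breaks this loop, and the stated condition is exactly what keeps the resulting fixed point finite. A cleaner alternative that avoids guessing $\alpha_u^*$ is to telescope the pre-PL decrease $f(w^{(t)})-f(w^{(t+1)})\ge \frac{{\alpha_l^{(0)}}^2}{2L{\alpha_u^{(t)}}^2}\|\nabla f(w^{(t)})\|^2$ from the proof of Theorem~\ref{thm: Theorem 1 - Linear Convergence of GGD} into $\sum_t \frac{\|\nabla f(w^{(t)})\|^2}{\sum_{k=0}^t\|\nabla f(w^{(k)})\|^2}\le \frac{2L(f(w^{(0)})-f(w^*))}{{\alpha_l^{(0)}}^2}$, and to note that a series of this form diverges whenever its denominators are unbounded; finiteness of the sum therefore forces $\sum_k\|\nabla f(w^{(k)})\|^2<\infty$, which gives the same uniform bound on $\alpha_u^{(t)}$ and may be cleaner to write out.
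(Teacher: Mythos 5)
Your main argument is essentially the paper's own proof: both bound ${\alpha_u^{(t)}}^2=\max_i \mathcal{G}_{i,i}^{(t)}$ by the trace $\sum_{k=0}^{t}\|\nabla f(w^{(k)})\|^2$, convert this to a summed suboptimality via Lemma \ref{lemma: Properties of $L$-smooth functions}b, and close a strong induction by summing the geometric contraction from Theorem \ref{thm: Theorem 1 - Linear Convergence of GGD}, with the hypothesis $\frac{{\alpha_l^{(0)}}^2}{2L(f(w^{(0)})-f(w^*))}>\frac{L}{\mu}$ entering exactly as you say, namely to make the bootstrap map ${\alpha_u^*}^2\mapsto \frac{2L^2(f(w^{(0)})-f(w^*))}{\mu{\alpha_l^{(0)}}^2}{\alpha_u^*}^2$ a contraction. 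The paper simply instantiates your $\alpha_u^*$ concretely, taking $S=\|\nabla f(w^{(0)})\|^2$ so that the base case ${\alpha_u^{(0)}}^2\leq S$ is automatic.

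Your closing ``cleaner alternative,'' however, is genuinely different from anything in the paper and deserves comment. The telescoped bound $\sum_t a_t/A_t\leq \frac{2L(f(w^{(0)})-f(w^*))}{{\alpha_l^{(0)}}^2}$ with $a_t=\|\nabla f(w^{(t)})\|^2$ and $A_t=\sum_{k\leq t}a_k$, combined with the Abel--Dini fact that $\sum_t a_t/A_t$ diverges whenever $A_t\to\infty$, forces $\sup_t A_t<\infty$ \emph{without ever invoking the corollary's hypothesis}: the per-step decrease coefficient $-\frac{{\alpha_l^{(0)}}^2}{2L{\alpha_u^{(t)}}^2}$ is valid for both step-size choices since $\alpha_l^{(t)}\geq\alpha_l^{(0)}$ guarantees $\eta<2\alpha_l^{(t)}/L$ at every $t$. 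So this route actually proves a stronger, unconditional statement, in the spirit of the two-phase AdaGrad-Norm analyses the paper cites. What it gives up is explicitness: one only learns that $A_\infty$ is finite, not its value, so the resulting contraction factor $1-\frac{\mu{\alpha_l^{(0)}}^2}{LA_\infty}$ is non-constructive, whereas the paper's hypothesis buys the explicit uniform bound $S=\|\nabla f(w^{(0)})\|^2$ and hence an explicit rate. If you write the alternative out, the only points to verify carefully are the ones you elided, and both are routine.
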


\begin{proof}
By definition of $\mathcal{G}^{(t)}$, we have that:
\begin{align*}
    &(1) ~~ {\alpha_l^{(t)}}^2 = \min_{i \in [d]} \mathcal{G}_{i,i}^{(t)}\\
    &(2) ~~ {\alpha_u^{(t)}}^2 = \max_{i \in [d]} \mathcal{G}_{i,i}^{(t)}
\end{align*}

In particular, we can choose $\alpha_l = \alpha_l^{(0)}$ uniformly.  We need to now ensure that $\alpha_u^{(t)}$ does not diverge.  We prove this by using strong induction to show that $\frac{{\alpha_l^{(t)}}^2}{{\alpha_u^{(t)}}^2} \geq  S$ uniformly for $S = \frac{\mu \alpha_l^2 - 2L^2 f(w^{(0)})}{\mu \alpha_u^2}$ (where the assumption on $f(w^{(0)})$ ensures that the numerator is positive).  The base case holds since we have $\frac{{\alpha_l^{(0)}}^2}{{\alpha_u^{(0)}}^2} \geq  S $ by definition. 
Now assume that $\frac{{\alpha_l^{(i)}}^2}{{\alpha_u^{(i)}}^2} \geq S$ for $i \in \{0, 1, \ldots t-1\}$.  Then we have:
\begin{align*}
    \frac{{\alpha_l^{(t)}}^2}{{\alpha_u^{(t)}}^2} &\geq \frac{{\alpha_l^{(0)}}^2}{{\alpha_u^{(t)}}^2} \\
    &\geq \frac{{\alpha_l^{(0)}}^2}{{\alpha_u^{(t-1)}}^2 + \|\nabla f(w^{(t)})\|^2}\\
    &\geq \frac{{\alpha_l^{(0)}}^2}{{\alpha_u^{(t-1)}}^2 + 2Lf(w^{(t)})} ~~~~ \text{(By Lemma \ref{lemma: Properties of $L$-smooth functions}b)}\\
    &\geq \frac{{\alpha_l^{(0)}}^2}{{\alpha_u^{(0)}}^2 + 2Lf(w^{(0)}) \sum_{j=1}^{t-1} \prod_{k=0}^{j}\left(1 - \frac{\mu{\alpha_l^{(k)}}^2}{L{\alpha_u^{(k)}}^2}\right)} \\
    &\geq \frac{{\alpha_l^{(0)}}^2}{{\alpha_u^{(0)}}^2 + 2Lf(w^{(0)}) \sum_{j=1}^{t-1} \left(1 - \frac{\mu S}{L}\right)
    ^j} \\
    &\geq \frac{{\alpha_l^{(0)}}^2}{{\alpha_u^{(0)}}^2 + 2Lf(w^{(0)}) \frac{1}{1 - 1 + \frac{\mu  S}{L}}} ~~~~ \text{$\left( \text{Since}~~ 0 < S < \frac{L}{\mu}\right)$} \\
    &= S
\end{align*}

Hence, by induction, $\frac{{\alpha_l^{(t)}}^2}{{\alpha_u^{(t)}}^2} \geq S$ for all timesteps $t$.  

\end{proof}

\subsection{Proof of Corollary \ref{cor: Mirror Descent Implicit Regularization}}
\label{appendix F: Proof of Corollary  Mirror Descent Implicit Regularization}

We present the corollary below.

\begin{corollary}
Suppose $\psi$ is an $\alpha_l$-strongly convex function and that $\nabla \psi$ is $\alpha_u$-Lipschitz.   Let $D_{\psi}(x, y) = \psi(x) - \psi(y) - \nabla \psi(y)^T (x - y)$ denote the Bregman divergence for $x, y \in \mathbb{R}^d$.  If $f: \mathbb{R}^{d} \rightarrow \mathbb{R}$ is non-negative, $L$-smooth, and $\mu$-PL\textsuperscript{*} on $\tilde{\mathcal{B}} = \{x  ~;~ \nabla \psi (x) \in \mathcal{B}(\nabla \psi (w^{(0)}), R)\}$ with $R = \frac{2 \sqrt{2L} \sqrt{f(w^{(0)})} \alpha_u^2 }{\alpha_l \mu}$, then:
\begin{align*}
    & (1) ~~ \text{There exists a global minimum $w^{(\infty)} \in \tilde{\mathcal{B}}$ such that $D_{\psi}(w^{(\infty)}, w^{(0)}) \leq \frac{R^2}{2\alpha_l}$}. \\
    & (2) ~~ \text{Mirror descent with potential $\psi$ converges linearly to $w^{(\infty)}$ for $\eta = \frac{\alpha_l}{L}$}. \\
    % & (3) ~~ \text{If $w^* = \argmin_{\{w ~; ~ f(w) = 0 \}} D_{\psi}(w, w^{(0)})$, then $D(w^*, w^{(\infty)}) \leq \frac{\alpha_u R^2}{\alpha_l^3} + \frac{R^2}{\alpha_l}$}.  
\end{align*} 
\end{corollary}

\begin{proof}
The proof of existence and linear convergence follow immediately from Theorem \ref{thm: Local convergence of GMD}.  All that remains is to show that $D_{\psi}(w^{(\infty)}, w^{(0)}) \leq \frac{R^2}{2\mu}$.  As $\psi$ is $\alpha_l$-strongly convex, we have:
\begin{align*}
    & \psi(w^{(\infty)}) \leq \psi(w^{(0)}) + \langle \nabla \psi (w^{(0)}), w^{(\infty)} -  w^{(0)} \rangle \\
    &~~~~~~~~~~~~~~ + \frac{1}{2\alpha_l} \|\nabla \psi (w^{(\infty)}) - \nabla \psi(w^{(0)}) \|^2 ~~ \text{By Lemma \ref{lemma: Strong Convexity Upper Bound}}\\
    \implies & D_{\psi}(w^{(\infty)}, w^{(0)}) \leq \frac{1}{2\alpha_l} \|\nabla \psi (w^{(\infty)}) - \nabla \psi(w^{(0)}) \|^2 \leq \frac{R^2}{2\alpha_l}
\end{align*}
% Now let $w^* = \arg\min_{\{w ~; ~ f(w) = 0 \}} D_{\psi}(w, w^{(0)})$.  Hence $D_{\psi}(w^*, w^{(0)}) < \frac{R^2}{2\alpha_l}$ by definition.  Then we have:
% \begin{align*}
%     D_{\psi}(w^*, w^{(\infty)}) &\leq \frac{1}{2\alpha_l} \| \nabla \psi(w^{*}) - \nabla \psi(w^{(\infty)}) \|^2 \\
%     & \leq \frac{1}{2\alpha_l} (2\| \nabla \psi(w^{*}) - \nabla \psi(w^{(0)}) \|^2 +  2\| \nabla  \psi(w^{(0)}) - \nabla \psi(w^{(\infty)}) \|^2)  \\
%     & \leq \frac{\alpha_u}{\alpha_l} \| w^* - w^{(0)}\|^2 + \frac{R^2}{ \alpha_l} \\
%     & \leq \frac{\alpha_u}{\alpha_l} \frac{2}{\alpha_l}D_{\psi}(w^*, w^{(0)}) + \frac{R^2}{\alpha_l} ~~ \text{By Definition \ref{def: Strong Convexity}}\\
%     & \leq \frac{\alpha_u R^2}{\alpha_l^3} + \frac{R^2}{\alpha_l}
% \end{align*}
\end{proof}

\subsection{Experiments on Over-parameterized Neural Networks}
\label{appendix H: Experiments on Over-parameterized Neural Networks}
Below, we present experiments in which we apply the learning rate given by Corollary \ref{prop: Adagrad Linear Convergence} to over-parameterized neural networks.  Since the main difficulty is estimating the parameter $L$ in neural networks, we instead provide a crude approximation for $L$ by setting $L^{(t)} = .99 \frac{\|\nabla f(w^{(t)})\|^2}{2 f(w^{(t)})}$.  The intuition for this approximation comes from Lemma \ref{lemma: Properties of $L$-smooth functions}.  While there are no guarantees that this approximation yields linear convergence according to our theory, Figure \ref{fig:Over-parameterized Networks} suggests empirically that this approximation provides convergence.  Moreover, this approximation allows us to compute our adaptive learning rate in practice.  

\begin{figure}[!h]
    \centering
    \includegraphics[width=1\textwidth]{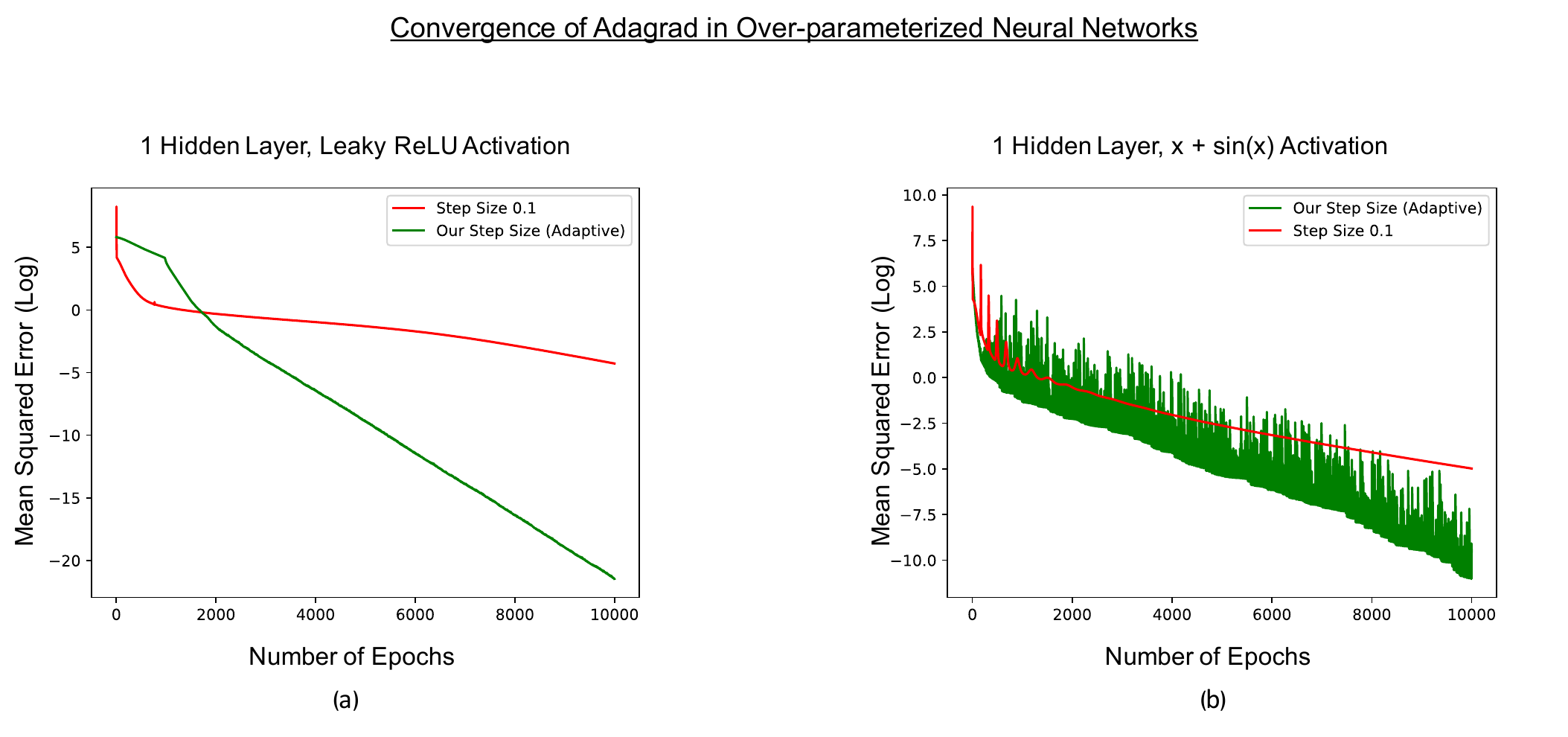}
    \caption{Using the adaptive rate provided by Corollary \ref{prop: Adagrad Linear Convergence} with $L$ approximated by  $L^{(t)} = .99 \frac{\|\nabla f(w^{(t)}\|^2}{2 f(w^{(t)})}$ leads to convergence for  Adagrad in the noisy linear regression setting (60 examples in 50 dimensions with uniform noise applied to the labels).  (a) 1 hidden layer network with Leaky ReLU activation \cite{LeakyReLU} and 100 hidden units. (b) 1 hidden layer network with $x + \sin(x)$ activation with 100 hidden units. All networks were trained using a single Titan Xp, but can be trained on a laptop as well.}
    \label{fig:Over-parameterized Networks}
\end{figure}

Code for all experiments is available at:

\url{https://anonymous.4open.science/r/cef30260-473d-4116-bda1-1debdcc4e00a/}

\subsection{Alternate Conditions for Linear Convergence in SGMD}
\label{appendix I: alternate conditions for linear convergence in SGMD}

We now briefly discuss the difficulty in extending Theorem \ref{thm: Theorem 1 - Linear Convergence of GGD} to the stochastic setting.  Instead of resorting to a Taylor series analysis, an alternate strategy is to consider convergence of the terms $z^{(t)} = \phi(w^{(t)})$ under the assumption that the Jacobian of $\phi$, $J_{\phi}$, has bounded, non-zero spectrum.  In this case, the GMD updates proceed as follows:
\begin{align*}
    z^{(t+1)} = z^{(t)} - \eta \nabla f(\phi^{-1}(z^{(t)}))
\end{align*}
Letting $g(z) = f(\phi^{-1}(z))$ for all $z \in \mathbb{R}^{d}$, we have:
\begin{align*}
    \nabla_z f(\phi^{-1}(z)) = J_{\phi}(z) \nabla_z g(z)
\end{align*}
Thus, the GMD update rule can be written as: 
\begin{align*}
    z^{(t+1)} = z^{(t)} - \eta J_{\phi}(z^{(t)}) \nabla_z g(z^{(t)})
\end{align*}

This update rule is similar to that of pre-conditioned gradient descent, and if $g$ is $\mu$-PL for some $\mu$ and $g$ is $L$-smooth, then we can easily establish linear convergence even in the stochastic setting.  However, we provide an example below that demonstrates additional conditions are necessary for concluding that $g$ is even $L$-smooth. 

\textit{Example.}
Let $f(x) = \frac{x^2}{2}$ and let $\phi^{-1}(x) = 2x + \sin x $, which  is strictly monotonically increasing and so invertible. Then, $g(x) = f(\phi^{-1}(x))$.  Note that $\phi^{-1}$ has bounded derivatives of all orders and $\phi$ has bounded derivative since:
\begin{align*}
    \frac{d\phi(x)}{dx} = \frac{1}{\frac{d\phi^{-1}(\phi(x))}{dx}} = \frac{1}{2 + \cos(\phi(x))}
\end{align*}
Additionally, $f$ is clearly $L$-smooth ($L = 1$). Now $g$ is not $L$-smooth since
\begin{align*}
    g'(x) = f'(\phi^{-1}(x)) \frac{d\phi^{-1}(x)}{dx} = (2x + \sin x)(2 + \cos x) 
\end{align*}
and $2x \cos x$ is not a Lipschitz function.  

Nevertheless, the example above satisfies the conditions of Theorem \ref{thm: Theorem 2 - Taylor Series Analysis}, and so our analysis for linear convergence still holds for this case.  
\end{document}